\documentclass{article}

 \usepackage[main, final]{neurips_2026}

\usepackage[utf8]{inputenc} 
\usepackage[T1]{fontenc}    
\usepackage{hyperref}       
\usepackage{url}            
\usepackage{booktabs}       
\usepackage{amsfonts}       
\usepackage{nicefrac}       
\usepackage{microtype}      
\usepackage[dvipsnames]{xcolor}     
\usepackage{amsmath,amssymb}
\usepackage{amsthm}
\usepackage{algorithm}
\usepackage{algpseudocode}
\usepackage{bm}
\usepackage{graphicx}
\usepackage{booktabs}
\usepackage{caption}
\usepackage{multirow}
\usepackage{wrapfig}
\usepackage[table]{xcolor}
\usepackage{enumitem}

\newtheorem{theorem}{Theorem}[section]
\newtheorem{proposition}[theorem]{Proposition}
\newtheorem{lemma}[theorem]{Lemma}
\newtheorem{corollary}[theorem]{Corollary}
\newtheorem{definition}[theorem]{Definition}

\newtheorem{remark}[theorem]{Remark}
\newcommand{\beststd}[2]{\textbf{#1}\kern0.06em\raisebox{-0.45ex}{\fontsize{6}{6}\selectfont \textpm#2}}
\newcommand{\secondstd}[2]{\underline{#1}\kern0.06em\raisebox{-0.45ex}{\fontsize{6}{6}\selectfont \textpm#2}}
\newcommand{\std}[2]{#1\kern0.06em\raisebox{-0.45ex}{\fontsize{6}{6}\selectfont \textpm#2}}

\usepackage{titlesec}

\titlespacing*{\section}{0pt}{1.4ex}{0.6ex}
\titlespacing*{\subsection}{0pt}{1.1ex}{0.5ex}

\usepackage{placeins}
\usepackage{caption}
\usepackage{comment}

\title{Beyond Empirical Support: Structured Outlier Generation via Sinkhorn Optimal Transport}

\author{
  Haixiang Sun, Andrew L. Liu \\
  Edwardson School of Industrial Engineering\\
  Purdue University\\
  West Lafayette, IN, 47906 \\
  \texttt{\{sun1321, andrewliu\}@purdue.edu} \\
}

\begin{document}

\maketitle

\begin{abstract}
Outliers are essential for evaluating and improving the robustness of machine learning systems, especially when future distributions may differ significantly from historical training data. In high-stakes applications, robustness often depends on rare cases that finite datasets fail to capture, making simple resampling or perturbation insufficient for stress scenario generation. Existing outlier synthesis methods typically rely on sparse neighborhoods, low support latent regions, or classifier boundary crossings, which can be heuristic, unstable, and tied to specific modalities or architectures. We therefore propose Sinkhorn Boundary Outlier Generation (SBOG), a structured framework for latent-space outlier generation that couples Sinkhorn optimal transport geometry with distributionally robust boundary modeling. The resulting Sinkhorn-induced support cost guides the sampler toward weakly supported boundary regions, while semantic constraints prevent uncontrolled drift from the intended context, yielding controlled deviations from the in-distribution reference measure rather than arbitrary sparse-region samples. Experiments on time series anomaly generation and image outlier synthesis show that our framework produces informative, semantically controlled outliers and improves downstream robustness evaluation across modalities, providing a foundation for stress scenario generation beyond empirical support.
\end{abstract}



\section{Introduction}
Outliers play an important role in building reliable machine learning systems. In many real-world applications, rare or atypical cases are often precisely the ones that determine whether a model behaves robustly under deployment. Yet such outliers are inherently difficult to study since they occur infrequently, can vary substantially in form, and therefore are poorly covered by finite training datasets. Consequently, most existing work treats outliers primarily as anomalies to be detected, monitored, and mitigated \cite{pmlr-v80-ruff18a,liu2020energy}. Although this line of work is indispensable, it leaves open a complementary question of equal importance: how to model and generate outliers themselves. Addressing this question is important for understanding the structure and mechanisms underlying rare events. It also enables systematic construction of informative training and evaluation cases beyond standard data collection, which can strengthen model robustness, reveal failure modes before deployment, and support safer decision-making under uncertainty.

However, dedicated work on outlier generation remains relatively scarce. Existing approaches typically first learn a representation space and then identify candidate boundary samples using empirical support criteria, such as sparse nearest-neighbor neighborhoods~\cite{npos}, low-likelihood latent regions~\cite{du2023dream}, or classifier-boundary crossing rules~\cite{liao2025bood}. These candidates are then perturbed, sampled, or decoded back to the input space to produce synthetic outliers. Although effective in practice, such criteria often treat low support as the primary signal of outlyingness. Low support alone, however, is not sufficient for useful outlier generation. A point may be sparse because it lies near an informative class boundary, but it may also be sparse because it drifts off the semantic manifold, moves toward another class, or becomes difficult for the generator to decode into a meaningful sample. Therefore, for outlier generation, the relevant objective is not merely to find low-density feature points, but to find weakly supported candidates that remain semantically anchored and generatively valid. Moreover, existing synthesis pipelines are often tailored to a specific modality or generator, with many methods designed primarily for time-series anomaly generation~\cite{darban2025carla,app14177714,darban2025genias} or image outlier synthesis~\cite{du2023dream,gao2025good,xu2024calibrated}. As a result, these modality-specific designs may not generalize well when the underlying data distribution or modality changes.

Motivated by this perspective, we use Sinkhorn optimal transport \cite{cuturi2013sinkhorn} as a support-aware geometry for structured outlier generation in latent space. Given an in-distribution reference measure and a ground cost induced by the learned representation, the entropic transport geometry provides a smooth way to measure how much in-distribution mass can support a candidate latent point. This support cost guides generation toward weakly supported boundary regions, while semantic constraints ensure that the generated candidates remain aligned with the intended context rather than drifting into arbitrary sparse regions. Since the construction only requires a latent reference measure and a ground cost, the same principle can be instantiated across different underlying distributions and modalities, including time series and images, without relying on modality-specific neighborhood rules.


Based on this principle, we propose Sinkhorn Boundary Outlier Generation (SBOG), a structured boundary-generation framework that combines transport-based support scoring, boundary-anchor selection and semantic feasibility. The key component of SBOG is Sinkhorn Outlier Energy (SOE), which measures the transport cost of supporting a candidate point from the in-distribution reference measure. Notably, under the empirical Euclidean case, SOE recovers a smooth kernel-support form closely related to Gaussian Kernel Density Estimation (KDE). However, here SOE is not to introduce a standalone density estimator. Rather, SBOG uses SOE as a transport-based boundary score: SOE identifies smooth boundary regions for anchor selection and candidate acceptance, while semantic constraints keep the generated outliers controlled and class-consistent. This use differs from purely statistical sparsity or density-rejection rules, which often yield unstable low-support scores.

Beyond this geometric view, the corresponding Sinkhorn Distributionally Robust Optimization (DRO) formulation provides an optimization-consistent interpretation of the boundary decisions made by SBOG \cite{kuhn2025distributionally}. The one-dimensional dual \cite{wang2025sinkhorn} exposes the same Gibbs normalizer that appears in SOE, linking high-energy candidates to deviations that are expensive to support under the transport uncertainty geometry. We use
this connection to justify SBOG as a model-agnostic boundary-generation procedure: SOE selects transport-expensive candidates, while semantic constraints restrict them to meaningful class-consistent deviations. When a downstream loss is available, the same dual can be used to derive task-aware variants. As a result, SBOG provides a unified sampling framework that can be evaluated through both generation-quality diagnostics and downstream robustness performance. Our main contributions are summarized as follows: \vspace{-6pt}
\begin{itemize}[leftmargin=1.2em]
\item We formulate structured outlier generation as constrained latent boundary generation, where candidates must be weakly supported by the in-distribution reference measure while remaining semantically anchored and generatively valid. This reframes outlier synthesis beyond low-density or sparse-neighborhood search.

\item We develop Sinkhorn Boundary Outlier Generation (SBOG), an algorithm that uses Sinkhorn Outlier Energy (SOE) to select boundary anchors, accept weakly supported candidates under the Sinkhorn transport geometry, enforce semantic feasibility, and decode synthetic outliers. While SOE reduces to a kernel-support score in the empirical Euclidean case, SBOG uses it as a transport-based boundary criterion rather than as a standalone density score.

\item We show that the SOE boundary rule appears as Gibbs-normalization term in the Sinkhorn-DRO dual, linking SBOG's thresholding and energy-maximizing candidate selection to boundary decisions. Across time-series and image benchmarks, SBOG generates informative, semantically controlled outliers and improves downstream robustness evaluation.
\end{itemize}
\vspace{-10pt}
\section{Preliminaries}

\subsection{Sinkhorn Distributionally Robust Optimization}
\label{sec:sinkhorn_dro}

We begin by recalling the Sinkhorn Distributionally Robust Optimization framework \cite{kuhn2025distributionally}, which provides the transport geometry used later to define outlyingness in latent space.  Let $\mathcal X$ be a measurable space, $\mu\in\mathcal P(\mathcal X)$ denote a nominal distribution, $\nu\in\mathcal P(\mathcal X)$ be a reference measure for admissible perturbations, and $d:\mathcal X\times\mathcal X\to\mathbb R_+$ be a ground cost. For $\varepsilon>0$, define $K_\nu(x)
:=
\int_{\mathcal X}
\exp\!\left(-\frac{d(x,\tilde x)}{\varepsilon}\right)
d\nu(\tilde x)$,
and assume $0<K_\nu(x)<\infty$ for $\mu$-almost every $x$. The associated Gibbs kernel is
\begin{equation}
dQ^{\nu}_{\varepsilon,x}(\tilde x)
=
\frac{\exp\!\big(-d(x,\tilde x)/\varepsilon\big)}{K_\nu(x)}
d\nu(\tilde x).
\end{equation}

\begin{definition}[Sinkhorn OT]
For any candidate worst-case distribution $P\in\mathcal P(\mathcal X)$, define the normalized Sinkhorn Optimal Transport (OT) discrepancy
\begin{equation}\label{sinkhorn_dist}
W^\nu_\varepsilon(\mu,P)
=
\inf_{\gamma\in\Gamma(\mu,P)}
\left\{
\mathbb E_{(x,\tilde x)\sim\gamma}\big[d(x,\tilde x)\big]
+
\varepsilon\mathrm{KL}(\gamma\|\mu\otimes\nu)
+
\varepsilon\mathbb E_{x\sim\mu}\big[\log K_\nu(x)\big]
\right\},
\end{equation}
where $\Gamma(\mu,P)$ is the set of couplings with first marginal $\mu$ and second marginal $P$, and $\mathrm{KL}(\cdot\Vert\cdot)$ is the Kullback--Leibler divergence. Equivalently,
\[
W^\nu_\varepsilon(\mu,P)
=
\inf_{\gamma\in\Gamma(\mu,P)}
\varepsilon\mathrm{KL}\!\left(\gamma\middle\|\mu\otimes Q^\nu_\varepsilon\right),
\]
where $(\mu\otimes Q^\nu_\varepsilon)(dx,d\tilde x):=\mu(dx)Q^\nu_{\varepsilon,x}(d\tilde x)$. The normalization by $K_\nu(x)$ makes the minimum discrepancy over all candidate marginals equal to zero. 
\end{definition}

\begin{definition}[Sinkhorn DRO]
\label{def:sinkhorn_dro}
Let $f:\mathcal X\to\mathbb R$ be measurable and let $\rho>0$. The Sinkhorn ambiguity set is the normalized entropic OT ball $\mathcal U_{\rho}^{\nu}(\mu)=\{P\in\mathcal P(\mathcal X):W^\nu_\varepsilon(\mu,P)\le\rho\}$,and the corresponding worst-case expected value is
\begin{equation}
V_\rho(f;\mu,\nu)
=
\sup_{P\in\mathcal U_{\rho}^{\nu}(\mu)}
\mathbb E_{\tilde x\sim P}\big[f(\tilde x)\big].
\end{equation}
\end{definition}\vspace{-10pt}
By adapting the strong-duality result of \cite[Theorem~1]{wang2025sinkhorn}, the Sinkhorn DRO can be simplified as:
\begin{lemma}[One-dimensional Sinkhorn-DRO dual]\label{lem:sinkhorndro}
Assume the ambiguity set is nonempty and the exponential moments below are finite. By strong duality, $V_\rho(f;\mu,\nu)$ admits the one-dimensional dual representation $V_\rho(f;\mu,\nu)=\inf_{\lambda>0}V_D(\lambda)$, with
\begin{equation}\label{eq_dual}
V_D(\lambda)=\lambda\rho+\lambda\varepsilon\;\mathbb E_{x\sim\mu}\left[\log\mathbb E_{\tilde x\sim Q^{\nu}_{\varepsilon,x}}\exp\!\left(\frac{f(\tilde x)}{\lambda\varepsilon}\right)\right].
\end{equation}
\end{lemma}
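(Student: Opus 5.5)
The plan is a Lagrangian argument that reduces the infinite-dimensional problem to a one-dimensional one, using the KL form of the discrepancy from the Sinkhorn OT definition. First I rewrite the primal as an optimization over couplings rather than marginals. Since $W^\nu_\varepsilon(\mu,P)=\inf_{\gamma\in\Gamma(\mu,P)}\varepsilon\mathrm{KL}(\gamma\|\mu\otimes Q^\nu_\varepsilon)$ and $\mathbb E_P[f]=\mathbb E_\gamma[f(\tilde x)]$ for any $\gamma$ with second marginal $P$, we get
\[
V_\rho(f;\mu,\nu)=\sup\Big\{\mathbb E_{\gamma}[f(\tilde x)] : \gamma\ \text{has first marginal}\ \mu,\ \varepsilon\mathrm{KL}(\gamma\|\mu\otimes Q^\nu_\varepsilon)\le\rho\Big\}.
\]
Only the first marginal is now constrained. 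Next I disintegrate $\gamma(dx,d\tilde x)=\mu(dx)\gamma_x(d\tilde x)$, which gives $\mathrm{KL}(\gamma\|\mu\otimes Q^\nu_\varepsilon)=\mathbb E_{x\sim\mu}\mathrm{KL}(\gamma_x\|Q^\nu_{\varepsilon,x})$.

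For the weak duality direction ($\le$), I fix $\lambda>0$ and form the Lagrangian
\[
\mathbb E_\mu\big[\mathbb E_{\gamma_x}f-\lambda\varepsilon\,\mathrm{KL}(\gamma_x\|Q^\nu_{\varepsilon,x})\big]+\lambda\rho .
\]
For each $x$ I apply the Donsker--Varadhan (Gibbs) variational formula:
\[
\sup_{\gamma_x}\big\{\mathbb E_{\gamma_x}f-\lambda\varepsilon\mathrm{KL}(\gamma_x\|Q_x)\big\}=\lambda\varepsilon\log\mathbb E_{Q_x}e^{f/(\lambda\varepsilon)}.
\]
The maximizer is the tilted measure $d\gamma_x^\lambda\propto e^{f/(\lambda\varepsilon)}dQ_x$. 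Integrating over $\mu$ shows that $V_D(\lambda)$ dominates every feasible objective value, so $V_\rho\le\inf_{\lambda>0}V_D(\lambda)$. Measurability of $x\mapsto\gamma_x^\lambda$ comes directly from the explicit density, and the finite exponential moment assumption keeps every term finite.

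For the reverse inequality I will use convexity in $\lambda$. The function $V_D$ is convex on $(0,\infty)$, because $\lambda\varepsilon\log\mathbb E e^{f/(\lambda\varepsilon)}$ is the perspective of a convex log-moment-generating function. Its derivative is
\[
V_D'(\lambda)=\rho-\varepsilon\,\mathbb E_\mu\mathrm{KL}(\gamma^\lambda_x\|Q_x).
\]
If the infimum is attained at some interior $\lambda^*$, then the first-order condition makes $\gamma^{\lambda^*}$ feasible with the KL constraint tight. Complementary slackness then gives $\mathbb E_{\gamma^{\lambda^*}}f=V_D(\lambda^*)$, which is equality.

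The main obstacle is the boundary cases where the infimum is not attained in the interior. As $\lambda\to\infty$, $V_D(\lambda)\to\mathbb E_{\mu\otimes Q}f+\lambda\rho$, which diverges, so this endpoint is harmless. As $\lambda\to0$, $V_D(\lambda)$ tends to the essential supremum of $f$ under $Q_x$, averaged over $\mu$. In that case I would construct approximately optimal feasible couplings by truncating $\gamma_x$ to superlevel sets of $f$ whose KL cost stays below $\rho$. Alternatively I would simply invoke \cite[Theorem~1]{wang2025sinkhorn} for the strong duality, since our normalized discrepancy differs from theirs only by the constant $\varepsilon\mathbb E_\mu\log K_\nu$. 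That constant shifts the radius and is absorbed exactly by rewriting their kernel as $Q^\nu_{\varepsilon,x}$.
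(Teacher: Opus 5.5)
Your proposal is correct. Your closing fallback is exactly the paper's route: the paper gives no independent proof, only an appeal to \cite[Theorem~1]{wang2025sinkhorn}. The adaptation is the one you describe. The constraint $W^\nu_\varepsilon(\mu,P)\le\rho$ is their unnormalized constraint at radius $\rho-\varepsilon\mathbb E_\mu[\log K_\nu]$, so the shifted radius in their dual collapses to $\rho$ and their dual becomes \eqref{eq_dual}.

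The main body of your proposal is a genuinely different, self-contained argument. Passing to couplings with fixed first marginal and using $\mathrm{KL}(\gamma\|\mu\otimes Q^\nu_\varepsilon)=\mathbb E_\mu\mathrm{KL}(\gamma_x\|Q^\nu_{\varepsilon,x})$ leaves a single convex constraint. Donsker--Varadhan then gives weak duality. Convexity of $V_D$, with $V_D'(\lambda)=\rho-\varepsilon\mathbb E_\mu\mathrm{KL}(\gamma^\lambda_x\|Q^\nu_{\varepsilon,x})$, gives equality through the exponentially tilted coupling.

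\textbf{What each route buys.} Your route produces an explicit worst-case coupling and shows where $\rho>0$ enters: Jensen gives $V_D(\lambda)\ge\lambda\rho+\mathbb E_{\mu\otimes Q^\nu_\varepsilon}f\to\infty$ as $\lambda\to\infty$. The citation buys generality, including log-moments finite only on part of $(0,\infty)$. There the minimizer can sit at the edge of the domain of $V_D$, and your first-order condition is unavailable.

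\textbf{Reading of the hypothesis.} Your argument therefore needs the log-moment to be finite for every $\lambda>0$, the reading used in Proposition~\ref{thm:outlier-in-dRO}. Under that reading, differentiating under $\mathbb E_\mu$ is justified by monotone convergence of convex difference quotients.

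\textbf{The $\lambda\downarrow0$ case needs no truncation.} If the infimum is not attained in the interior, convexity together with $V_D\to\infty$ forces $V_D$ to be nondecreasing, so $V_D'(\lambda)\ge0$ for all $\lambda$. Hence every tilted coupling $\gamma^\lambda$ is feasible. Its value is $\mathbb E_{\gamma^\lambda}f=V_D(\lambda)-\lambda V_D'(\lambda)\ge V_D(\lambda)-\lambda\rho$, which tends to $\inf_{\lambda>0}V_D(\lambda)$.
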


The kernel $Q^{\nu}_{\varepsilon,x}$ concentrates on low-cost perturbations of $x$, while the log-moment generating term in \eqref{eq_dual} exponentially tilts toward large values of $f$, yielding a softmax-type robustification governed by $(\rho,\varepsilon)$ and the dual temperature $\lambda$.

\subsection{Outlier Generation}



Let $\mathcal X$ denote the input space, and let
$D_{\rm in}=\{(x_i,y_i)\}_{i=1}^n$ with $y_i\in\{1,\ldots,C\}$ be an
in-distribution dataset. Let $T:\{1,\ldots,C\}\to\mathbb R^m$ be a fixed conditioning map, with $t_c:=T(c)$. Let $h_\theta:\mathcal X\to\mathbb R^m$ be a trainable encoder and define the normalized representation $z(x;\theta):=\frac{h_\theta(x)}{\|h_\theta(x)\|_2}\in\mathcal Z\subseteq\mathbb R^m$. We write $z_i:=z(x_i;\theta)$ and define the empirical In-Distriubtion (ID) embedding set $Z_n:=\{z_i\}_{i=1}^n\subset\mathcal Z$. The first stage learns $\theta$ so that each $z_i$ is aligned with its corresponding conditioning vector $t_{y_i}$. The empirical latent reference measure is $\hat\nu_n := \frac1n\sum_{i=1}^n \delta_{z_i}$. Outlier generation can therefore be viewed as the problem of generating
new latent points relative to the geometry induced by the embedding set.

Given the learned latent space $\mathcal Z$ and the empirical ID set $Z_n$,
let $\mathcal I(Z_n)\subseteq\mathcal Z$ denote the region regarded as
in-distribution. An outlier sampler is a conditional distribution
$S(\cdot\mid Z_n)$ over $\mathcal Z$ designed to place most of its mass outside $\mathcal I(Z_n)$. It produces latent candidates $v$ at least approximately.  Each sampled latent point $v$ is then mapped back to the input space through a fixed conditional generator $x_{\mathrm{ood}}\sim p(x\mid v)$, where OOD denotes out-of-distribution.. Repeating this procedure yields an auxiliary set $\mathcal{D}_{\mathrm{ood}}=\{x_{\mathrm{ood}}\}$.


\section{SBOG: Sinkhorn Boundary Outlier Generation}
\begin{figure}[t]
    \centering
    \includegraphics[width=1\linewidth]{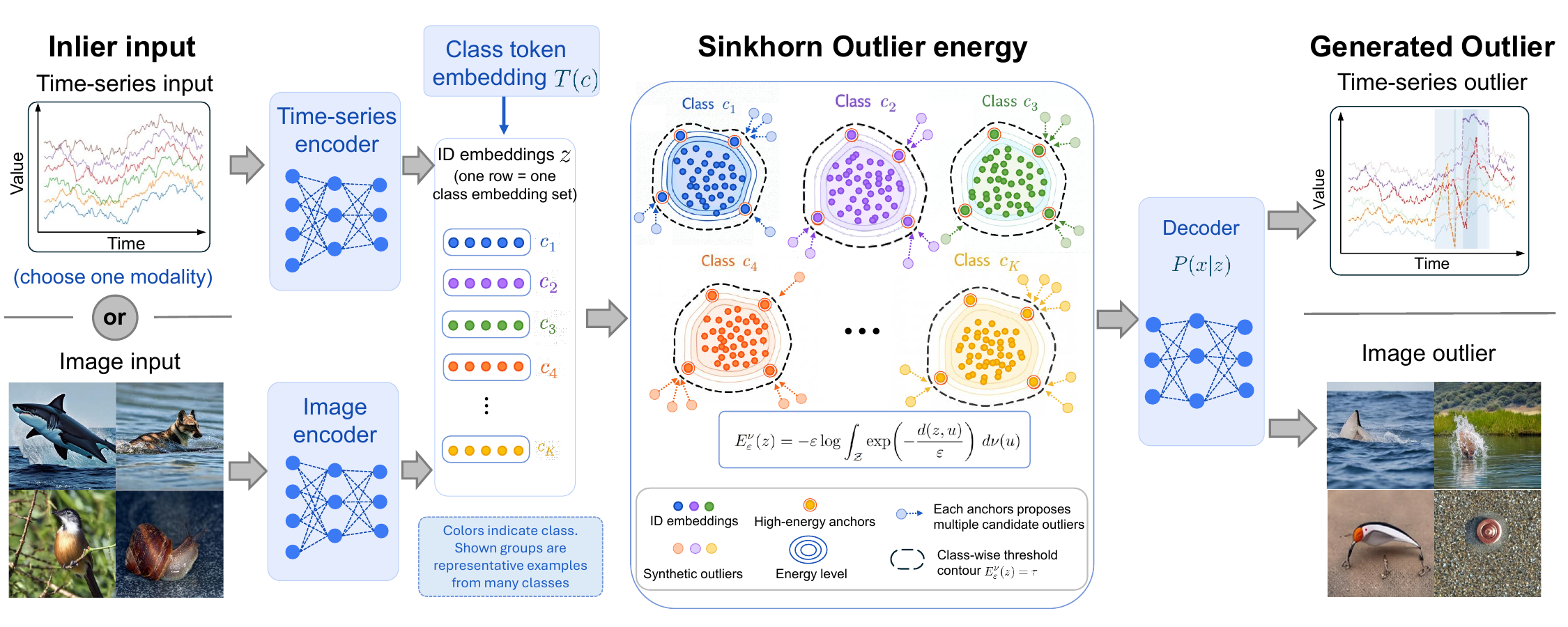}
    \caption{Pipeline of Sinkhorn Boundary Outlier Generation (SBOG).}\vspace{-10pt}
    \label{fig:placeholder}
\end{figure}
\subsection{Outlier Sampling under Latent Space}
\label{sec:outlier_sampling}

We instantiate the sampler $S(\cdot\mid Z_n)$ as Sinkhorn Boundary Outlier Generation (SBOG), a latent-space procedure for generating structured outliers. We work in the latent representation space $\mathcal Z$ and generate synthetic outliers $v\in\mathcal Z$. This is reasonable because alignment-trained encoders tend to organize representations semantically: samples from the same class concentrate around their class prototype, while different classes become separated in latent space \cite{neuralcollapse}. We therefore assume that the learned representation space $\mathcal Z$ is sufficiently aligned so that the ground cost $d:\mathcal Z\times\mathcal Z\to\mathbb R_+$ reflects local semantic geometry. Under this geometry, SBOG does not define outlyingness through a purely statistical proxy such as nearest-neighbor sparsity or classifier-boundary crossing. Instead, it measures how expensive it is to support a candidate latent point using mass from the in-distribution reference measure under an entropic transport geometry. This gives a smooth boundary-support criterion that can be instantiated across different underlying distributions and modalities.

The key support component of SBOG is a transport-based energy, which we introduce first. Let \(\nu \in \mathcal P(\mathcal Z)\) denote the in-distribution reference measure in the latent representation space, for a candidate latent point \(z \in \mathcal Z\), we define its entropic Sinkhorn support as
\begin{equation}
K^\nu_\varepsilon(z)
:=
\int_{\mathcal Z}
\exp\!\left(-\frac{d(z,u)}{\varepsilon}\right)d\nu(u),
\qquad z\in\mathcal Z .
\end{equation}
This quantity aggregates the amount of in-distribution mass that can support \(z\), where nearby mass contributes more strongly and the temperature \(\varepsilon\) controls the softness of the transport neighborhood. A large value of \(K^\nu_\varepsilon(z)\) indicates that \(z\) is well supported by the reference measure, whereas a small value indicates weak support under the Sinkhorn transport geometry.

\begin{definition}[Sinkhorn Outlier Energy]
\label{def:outlier_energy}
Let $\nu \in \mathcal{P}(\mathcal{Z})$ be a reference measure on the latent
representation space $\mathcal{Z}$, let
$d : \mathcal{Z} \times \mathcal{Z} \to \mathbb{R}_+$ be a ground cost, and let
$\varepsilon>0$. For any $z \in \mathcal{Z}$ with
$0<K^\nu_\varepsilon(z)<\infty$, we define the Sinkhorn outlier energy of $z$ as
\begin{equation}
E^\nu_\varepsilon(z)
:=
-\varepsilon \log K^\nu_\varepsilon(z).
\end{equation}
\end{definition}\vspace{-8pt}
Sinkhorn Outlier Energy (SOE) is the transport-support energy used inside SBOG: larger values indicate candidate points that are more expensive to support from the in-distribution reference measure under the chosen ground geometry. SBOG uses SOE to define boundary contours for anchor selection and candidate acceptance, and its relation to empirical kernel support is clarified in Lemma~\ref{lem:soe_support_cost}.

\begin{lemma}[Entropic support cost of Sinkhorn outlier energy]
\label{lem:soe_support_cost}
For any $z\in Z$ such that $0<K^\nu_\epsilon(z)<\infty$, given the Sinkhorn outlier
energy $E^\nu_\epsilon(z)$, in particular, for the empirical reference measure
$\hat\nu_n=\frac1n\sum_{i=1}^n\delta_{z_i}$ with distinct atoms, we have
\[
E^{\hat\nu_n}_\epsilon(z)
=
\min_{w\in\Delta_n}
\left\{
\sum_{i=1}^n w_i d(z,z_i)
+
\epsilon\sum_{i=1}^n w_i\log(nw_i)
\right\},
\]
where $\Delta_n=\{w\in\mathbb R_+^n:\sum_{i=1}^n w_i=1\}$ with the minimizer being $w_i^\star(z) =\frac{\exp\!\left(-d(z,z_i)/\epsilon\right)}
{\sum_{j=1}^n \exp\!\left(-d(z,z_j)/\epsilon\right)}$.
\end{lemma}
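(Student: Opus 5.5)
This is the Gibbs variational principle (the Donsker--Varadhan / log-sum-exp duality) for a finite measure, and I would prove it by a direct computation. Write $a_i:=d(z,z_i)\ge 0$ and $Z:=\sum_{j=1}^n e^{-a_j/\epsilon}$. For $\hat\nu_n$ the support is
\[
K^{\hat\nu_n}_\epsilon(z)=\tfrac1n Z,
\]
so Definition~\ref{def:outlier_energy} gives
\[
E^{\hat\nu_n}_\epsilon(z)=-\epsilon\log Z+\epsilon\log n .
\]
It therefore suffices to show that the minimum of
\[
F(w):=\sum_i w_i a_i+\epsilon\sum_i w_i\log(nw_i)
\]
over $\Delta_n$ equals $-\epsilon\log(Z/n)$, and that it is attained at $w^\star_i=e^{-a_i/\epsilon}/Z$. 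The hypothesis that the atoms are distinct only ensures that $\hat\nu_n$ has exactly $n$ atoms of mass $1/n$, so the indexing by $i$ is unambiguous. Finiteness and positivity of $K$ are automatic here.

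The key step is to rewrite $F$ as a relative entropy plus a constant. Since $a_i=-\epsilon\log(Zw^\star_i)$, we have
\[
\sum_i w_i a_i=-\epsilon\sum_i w_i\log w^\star_i-\epsilon\log Z .
\]
Substituting this into $F$ gives
\[
F(w)=\epsilon\sum_i w_i\log\frac{w_i}{w^\star_i}+\epsilon\log n-\epsilon\log Z=\epsilon\,\mathrm{KL}(w\|w^\star)-\epsilon\log\frac{Z}{n},
\]
with the convention $0\log 0=0$.

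Gibbs' inequality states that $\mathrm{KL}(w\|w^\star)\ge 0$, with equality if and only if $w=w^\star$; this follows from Jensen's inequality applied to the concave function $\log$. Since $w^\star$ has full support, the relative entropy is finite for every $w\in\Delta_n$. Hence $\min_{\Delta_n}F=-\epsilon\log(Z/n)=E^{\hat\nu_n}_\epsilon(z)$, and the unique minimizer is $w^\star$, as claimed.

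The only delicate point is the boundary of the simplex, where some $w_i=0$. The KL rewriting handles this automatically, so no Lagrange-multiplier or interior argument is needed. As a cross-check, one could instead use strict convexity of $F$ together with a Lagrangian stationarity computation. Stationarity gives $a_i+\epsilon(\log(nw_i)+1)+\eta=0$, hence $w_i\propto e^{-a_i/\epsilon}$. The derivative of $F$ blows up to $-\infty$ as $w_i\to 0$, which rules out boundary minimizers. Plugging $w^\star$ into $F$ then recovers $-\epsilon\log(Z/n)$.
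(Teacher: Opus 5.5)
Your proof is correct and is essentially the paper's argument: both rewrite the cost-plus-entropy objective as $\epsilon\,\mathrm{KL}(\cdot\,\|\,\text{Gibbs})-\epsilon\log K$ and conclude from nonnegativity of KL, with equality exactly at the Gibbs weights. The only difference is that the paper proves the identity for a general $\nu$ and then uses the distinct-atoms hypothesis to identify measures $r\ll\hat\nu_n$ with weight vectors $w\in\Delta_n$, whereas you work directly on the simplex, where the computation does not actually need distinct atoms.
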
\vspace{-10pt}
\textbf{Relation to kernel support.} Lemma~\ref{lem:soe_support_cost} shows that, for the empirical reference measure $\hat{\nu}n=\frac{1}{n}\sum{i=1}^n\delta_{z_i}$, SOE admits a kernel-support form. Only in the special case of uniform empirical support with squared Euclidean cost does SOE reduce to a monotone transform of an unnormalized RBF-KDE score. Beyond this restricted setting, SOE is not equivalent to KDE, and our experiments further show that this transport-based boundary criterion yields stronger outlier anchors than KDE-based alternatives. In SBOG, we use this connection to reinterpret kernel support as an entropic transport support cost:
$E^\nu_\epsilon(z)=-\epsilon\log K^\nu_\epsilon(z)$ measures how costly it is, under the chosen ground geometry, for the in-distribution reference measure to assign support to $z$. Inside SBOG, this energy defines smooth boundary contours for anchor selection and candidate acceptance, while semantic feasibility and decoding constraints turn these contours into valid outlier candidates. The construction is modality-agnostic in latent space and, through the shared Gibbs normalizer in the Sinkhorn-DRO dual, links boundary selection to a robust downstream decision view. 

\begin{wrapfigure}{r}{0.56\textwidth}
\vspace{-25pt}
\begin{minipage}{0.56\textwidth}
\begin{algorithm}[H]
\caption{Sinkhorn Boundary Outlier Generation (SBOG)}
\label{alg:outlier_sampling}
\small
\begin{algorithmic}[1]
\Require $D_{\mathrm{in}}=\{(x_i,y_i)\}_{i=1}^n$, encoder $h_\theta$, decoder $p(x\mid v)$
\Require class anchors $\{t_c\}_{c=1}^C$, semantic floor $\tau_{\mathrm{sem}}\ge 0$
\Require $\varepsilon,\tau_E,\sigma>0$, boundary anchors $A$, proposals $M$, samples $N_{\mathrm{ood}}$, projection $\Pi_{\mathcal Z}$, $\widehat E(\cdot):=E^{\widehat\nu_n}_\varepsilon(\cdot)$
\State $Z_n \gets \{z_i=z(x_i;\theta)\}_{i=1}^n$, \quad
$\widehat\nu_n \gets \frac{1}{n}\sum_{i=1}^n \delta_{z_i}$
\State let $\mathcal I_A$ be the indices of the top-$A$ values of $\{\widehat E(z_i)\}_{i=1}^n$
\State $D_{\mathrm{ood}} \gets \emptyset$
\For{$k=1,\dots,N_{\mathrm{ood}}$}
    \Repeat
        \State sample $i \sim \mathrm{Unif}(\mathcal I_A)$
        \State $v_j \gets \Pi_{\mathcal Z}(z_i+\eta_j)$, \ $\eta_j\sim\mathcal N(0,\sigma^2 I)$, \ $j\in[M]$
        \State $\mathcal J \gets \{j:\widehat E(v_j)\ge \tau_E \ \text{and}\ \cos(v_j, t_{y_i})\ge \tau_{\mathrm{sem}}\}$
    \Until{$\mathcal J \neq \emptyset$}
    \State $v \gets \arg\max_{j\in\mathcal J} \widehat E(v_j)$, \quad $x_{\mathrm{ood}} \sim p(x\mid v)$
    \State $D_{\mathrm{ood}} \gets D_{\mathrm{ood}} \cup \{x_{\mathrm{ood}}\}$
\EndFor
\State \Return $D_{\mathrm{ood}}$
\end{algorithmic}
\end{algorithm}
\end{minipage}
\vspace{-18pt}
\end{wrapfigure}
Given the empirical latent reference set $Z_n$, SBOG first computes $\widehat E(z_i):=E^{\widehat\nu_n}_\varepsilon(z_i)$ for all embeddings $z_i\in Z_n$ and selects the top-energy embeddings as boundary anchors. Around each selected anchor, SBOG draws local latent proposals by adding isotropic Gaussian perturbations and projecting them back to $\mathcal Z$ when needed. A proposal $v$ is feasible only if it crosses the SOE support-cost contour $\widehat E(v)\ge \tau_E$ and satisfies the semantic feasibility constraint $\cos(v,t_{y_i})\ge \tau_{\mathrm{sem}}$, where $t_{y_i}$ is the class anchor associated with the source embedding $z_i$. Among the feasible proposals, SBOG selects the candidate with the largest SOE, the most transport-expensive point that remains semantically anchored, and then decodes it into input space via $x_{\mathrm{ood}}\sim p(x\mid v)$. Unlike purely statistical density-rejection or hard $k$-NN-based methods \cite{npos,du2023dream}, SBOG does not rely on a low-support criterion alone. It uses a smooth transport-support contour for boundary selection and explicitly couples this boundary criterion with semantic anchoring and generative validity.

The preceding sampler uses SOE as a model-agnostic boundary energy. We next show that this energy is not an isolated support heuristic, but the support-normalization term induced by the same Sinkhorn-DRO geometry. For a radius $r_{\mathrm{DRO}}>0$, the Sinkhorn uncertainty set around $\mu$ defines
\begin{equation}
\sup_{P \in \mathcal P(\mathcal Z):\ W^\nu_\varepsilon(\mu,P)\le r_{\mathrm{DRO}}}
\; \mathbb E_{u\sim P}[f(u)],
\end{equation}
for a latent loss or score function $f$. The next result explains why the SOE threshold and energy-maximizing feasible proposal in SBOG correspond to transport support-boundary acquisition.
\begin{proposition}[Sinkhorn outlier energy in the Sinkhorn-DRO dual]
\label{thm:outlier-in-dRO}
Under the definitions above, for any measurable $f:\mathcal Z\to\mathbb R$ whose
log-moment is finite for every $\lambda>0$, the Sinkhorn-DRO dual objective admits the following
decomposition:
\begin{align}
V_D(\lambda)
&=
\lambda r_{\mathrm{DRO}}
+
\lambda\varepsilon
\mathbb E_{z\sim\mu}
\left[
\log
\mathbb E_{u\sim Q_{\varepsilon,z}^{\nu}}
\exp\!\left(
\frac{f(u)}{\lambda\varepsilon}
\right)
\right]
\nonumber\\
&=
\lambda r_{\mathrm{DRO}}
+
\mathbb E_{z\sim\mu}
\left[
\lambda\varepsilon
\log
\int_{\mathcal Z}
\exp\!\left(
\frac{f(u)}{\lambda\varepsilon}
-
\frac{d(z,u)}{\varepsilon}
\right)
d\nu(u)
+
\lambda E^\nu_\varepsilon(z)
\right],
\label{eq:VD-decomp}
\end{align}
where $Q_{\varepsilon,z}^{\nu}$ is the Gibbs kernel and
$E^\nu_\varepsilon(z)= -\varepsilon\log K^\nu_\varepsilon(z)$ is the Sinkhorn outlier energy from
Definition~\ref{def:outlier_energy}.  Thus, the Sinkhorn-DRO dual objective contains the outlier
energy term $\lambda\mathbb E_{z\sim\mu}\!\left[E^\nu_\varepsilon(z)\right]$.
\end{proposition}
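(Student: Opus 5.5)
The first equality is Lemma~\ref{lem:sinkhorndro} with $\mathcal X=\mathcal Z$, $\rho=r_{\mathrm{DRO}}$ and the ground cost $d$ on $\mathcal Z$. The work is therefore in the second equality. The plan is to expand the Gibbs kernel inside the inner expectation and split off the normalizer.

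Fix $z$ with $0<K^\nu_\varepsilon(z)<\infty$; this holds for $\mu$-almost every $z$ by the standing assumption of Section~\ref{sec:sinkhorn_dro}. By the definition of $Q^\nu_{\varepsilon,z}$ as a density with respect to $\nu$,
\[
\mathbb E_{u\sim Q^\nu_{\varepsilon,z}}\exp\!\left(\frac{f(u)}{\lambda\varepsilon}\right)
=\frac{1}{K^\nu_\varepsilon(z)}\int_{\mathcal Z}\exp\!\left(\frac{f(u)}{\lambda\varepsilon}-\frac{d(z,u)}{\varepsilon}\right)d\nu(u).
\]
The finite log-moment assumption guarantees that the left side is finite. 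It is also strictly positive, since the integrand is positive and $\nu$ is a probability measure. Hence the right-hand integral lies in $(0,\infty)$, and we can take logarithms and split:
\[
\log(\cdot)=\log\int_{\mathcal Z}\exp\!\left(\frac{f(u)}{\lambda\varepsilon}-\frac{d(z,u)}{\varepsilon}\right)d\nu(u)-\log K^\nu_\varepsilon(z).
\]
Multiplying by $\lambda\varepsilon$ and using Definition~\ref{def:outlier_energy}, we get $-\lambda\varepsilon\log K^\nu_\varepsilon(z)=\lambda E^\nu_\varepsilon(z)$. This is exactly the bracketed expression in \eqref{eq:VD-decomp}, holding pointwise for $\mu$-a.e.\ $z$.

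It remains to take $\mathbb E_{z\sim\mu}$ and add $\lambda r_{\mathrm{DRO}}$. The main point of care is that the outer expectation of the sum should be well defined, and that splitting it into $\lambda\mathbb E_\mu[E^\nu_\varepsilon]$ plus the remaining term should not create an $\infty-\infty$ form. The assumption that the original expression in Lemma~\ref{lem:sinkhorndro} is well defined, together with the paper's implicit integrability of $\log K^\nu_\varepsilon$ under $\mu$, handles this. That integrability is the same condition that makes the normalizing term $\varepsilon\,\mathbb E_\mu[\log K_\nu]$ in \eqref{sinkhorn_dist} finite. Under it, linearity of expectation yields the stated term $\lambda\,\mathbb E_{z\sim\mu}[E^\nu_\varepsilon(z)]$.

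I expect no real obstacle here: the statement is essentially an algebraic identity. The only delicate step is this integrability bookkeeping. If it needed to be made explicit, I would state the integrability of $\log K^\nu_\varepsilon$ under $\mu$ as part of the hypotheses and otherwise leave the decomposition inside a single expectation, exactly as written in \eqref{eq:VD-decomp}.
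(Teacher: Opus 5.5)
Your proposal is correct and matches the paper's proof: both write $Q^\nu_{\varepsilon,z}$ as the density $e^{-d(z,u)/\varepsilon}/K^\nu_\varepsilon(z)$ with respect to $\nu$, factor out $1/K^\nu_\varepsilon(z)$, take logarithms, identify $-\lambda\varepsilon\log K^\nu_\varepsilon(z)=\lambda E^\nu_\varepsilon(z)$, and then integrate against $\mu$. Your checks that the inner integral is positive and finite, and that $\log K^\nu_\varepsilon$ is $\mu$-integrable (needed to split off $\lambda\,\mathbb E_{z\sim\mu}[E^\nu_\varepsilon(z)]$ as a separate term), are extra care that the paper's proof leaves implicit.
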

Proposition~\ref{thm:outlier-in-dRO} clarifies the role of SOE in SBOG. The
decomposition shows that SOE is the support-normalization cost induced by the
Sinkhorn uncertainty geometry. SBOG uses this term in a model-agnostic way:
the threshold $\widehat E(v)\ge \tau_E$ selects candidates beyond a transport
support-cost contour, and the final maximization chooses the most
transport-expensive candidate among the semantically feasible proposals. Thus,
the DRO connection justifies the support-boundary acquisition rule used by
SBOG, rather than claiming that the submitted algorithm solves a downstream task-specific DRO problem. If a downstream model is available, one can augment the acquisition with a task loss or uncertainty score using the same Gibbs kernel, and we leave this task-aware variant as a direct extension. Since SBOG relies on the empirical SOE threshold $\widehat E(v)\ge\tau_E$ for anchor selection and proposal acceptance, we next analyze the finite-sample stability of this boundary decision.
\begin{theorem}[Finite-sample stability of SOE thresholding]
\label{thm_soe}
For a fixed finite candidate pool $\mathcal V \subset \mathcal Z$, define $\underline K:=\inf_{v \in \mathcal V}K^{\bar{\nu}}_\varepsilon(v) > 0$, $\Delta_E=\inf_{v \in \mathcal V}\left|E^{\bar{\nu}}_\varepsilon(v) - \tau_E\right| > 0$, and $\tau_n(\delta)=\sqrt{{\log(2|\mathcal V|/\delta)}/{2n}}$. If $\tau_n(\delta) \le \underline K/2$, then with probability at least $1-\delta$,
\[
\sup_{v \in \mathcal V}
\left|
E^{\widehat{\nu}_n}_\varepsilon(v)
-
E^{\bar{\nu}}_\varepsilon(v)
\right|
\le
\frac{2\varepsilon}{\underline K}\tau_n(\delta).
\]
Consequently, if $\tau_n(\delta)\le\min\!\left\{\frac{\underline K}{2},\frac{\underline K \Delta_E}{4\varepsilon}\right\}$, then with probability at least $1-\delta$, $\mathbf 1\{E^{\widehat{\nu}_n}_\varepsilon(v) \ge \tau_E\}=\mathbf 1\{E^{\bar{\nu}}_\varepsilon(v) \ge \tau_E\}$ for any $v \in \mathcal V$. In particular, it suffices that
\[
n \ge
\max\!\left\{
\frac{2}{\underline K^2}\log\frac{2|\mathcal V|}{\delta},
\;
\frac{8\varepsilon^2}{\underline K^2\Delta_E^2}
\log\frac{2|\mathcal V|}{\delta}
\right\}.
\]
\end{theorem}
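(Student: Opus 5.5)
The proof combines three ingredients: Hoeffding's inequality for the empirical kernel support, a union bound over the finite pool $\mathcal V$, and a Lipschitz estimate for $-\varepsilon\log(\cdot)$ on $[\underline K/2,\infty)$. We read $\bar\nu$ as the population latent measure and $\widehat\nu_n$ as the empirical measure of $n$ i.i.d.\ draws $z_1,\dots,z_n\sim\bar\nu$.

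\emph{Concentration of the support.} Fix $v\in\mathcal V$. Then $K^{\widehat\nu_n}_\varepsilon(v)=\frac1n\sum_{i=1}^n \exp(-d(v,z_i)/\varepsilon)$ is an average of i.i.d.\ variables with mean $K^{\bar\nu}_\varepsilon(v)$. Since $d\ge 0$, each summand lies in $[0,1]$. Hoeffding's inequality gives $\Pr\bigl(|K^{\widehat\nu_n}_\varepsilon(v)-K^{\bar\nu}_\varepsilon(v)|>t\bigr)\le 2e^{-2nt^2}$. We take $t=\tau_n(\delta)$ and union bound over the $|\mathcal V|$ candidates, which is where finiteness of the pool is used. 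The result is that with probability at least $1-\delta$,
\[
\sup_{v\in\mathcal V}\bigl|K^{\widehat\nu_n}_\varepsilon(v)-K^{\bar\nu}_\varepsilon(v)\bigr|\le\tau_n(\delta).
\]

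\emph{Transfer to the energy.} On this event, the assumption $\tau_n(\delta)\le\underline K/2$ gives $K^{\widehat\nu_n}_\varepsilon(v)\ge K^{\bar\nu}_\varepsilon(v)-\tau_n\ge \underline K/2>0$. The empirical energy is therefore well defined, and both supports lie in $[\underline K/2,\infty)$. By the mean value theorem applied to $-\varepsilon\log$, the difference $|E^{\widehat\nu_n}_\varepsilon(v)-E^{\bar\nu}_\varepsilon(v)|$ is at most $\varepsilon/\xi$ times the support gap, where $\xi$ is an intermediate point. Since $\xi\ge\underline K/2$, we get
\[
\bigl|E^{\widehat\nu_n}_\varepsilon(v)-E^{\bar\nu}_\varepsilon(v)\bigr|\le \frac{2\varepsilon}{\underline K}\,\tau_n(\delta),
\]
uniformly over $\mathcal V$. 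This is the first claim. The step needing care is this lower bound on $\xi$. The population-side bound $K^{\bar\nu}_\varepsilon\ge\underline K$ alone is not enough, because $\log$ is steep near $0$. This is exactly why the condition $\tau_n\le\underline K/2$ is imposed: it keeps the empirical support away from zero as well.

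\emph{Threshold agreement and sample size.} If additionally $\tau_n\le \underline K\Delta_E/(4\varepsilon)$, the uniform deviation is at most $\Delta_E/2<\Delta_E$. Every $v$ has $|E^{\bar\nu}_\varepsilon(v)-\tau_E|\ge\Delta_E$, so the deviation is too small for $E^{\widehat\nu_n}_\varepsilon(v)$ to cross $\tau_E$. Hence the empirical and population indicators agree for all $v\in\mathcal V$ on the same event. Finally, we solve each of the two conditions on $\tau_n$ for $n$. The condition $\sqrt{\log(2|\mathcal V|/\delta)/(2n)}\le \underline K/2$ is equivalent to $n\ge \frac{2}{\underline K^2}\log\frac{2|\mathcal V|}{\delta}$. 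The condition $\tau_n\le \underline K\Delta_E/(4\varepsilon)$ is equivalent to $n\ge \frac{8\varepsilon^2}{\underline K^2\Delta_E^2}\log\frac{2|\mathcal V|}{\delta}$. Taking the maximum of the two gives the stated sample-size bound.
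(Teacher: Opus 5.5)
Your proposal is correct and follows the paper's proof essentially step for step. Both use Hoeffding's inequality for the $[0,1]$-valued kernel average, a union bound over $\mathcal V$ with $t=\tau_n(\delta)$, the lower bound $K^{\widehat\nu_n}_\varepsilon\ge\underline K/2$ that makes $-\varepsilon\log(\cdot)$ $(2\varepsilon/\underline K)$-Lipschitz, the $\Delta_E/2$ margin argument for indicator agreement, and the rearrangement of the two conditions into the bound on $n$. The only difference is cosmetic: you apply the mean value theorem on $[\underline K/2,\infty)$, while the paper restricts to $[\underline K/2,1]$ using $K\le 1$, and this does not change the constant.
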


For a fixed candidate pool $\mathcal V$, the sufficient sample size scales as $O\!\left(\frac{\varepsilon^2}{\underline K^2\Delta_E^2}\log\frac{|\mathcal V|}{\delta}\right)$ up to the $\underline K^{-2}\log(|\mathcal V|/\delta)$ normalizer condition. Thus, the dependence on the latent geometry, the Sinkhorn temperature, and the placement of the candidate pool is captured by $\underline K$ and by the attainable energy margin $\Delta_E$. In particular, if $\varepsilon$ is too small or the candidate pool lies in extremely weakly supported regions, then $\underline K$ may be small and the bound can become loose. We also analyze the sample complexity of neighborhood-based methods in Appendix~\ref{app_thm_neighbor}, highlighting the advantages of SOE.

We next give a theorem for the consistency of the generated distribution. Write $\widehat E_n=E_\varepsilon^{\widehat\nu_n}$ and
$E=E_\varepsilon^{\bar\nu}$.
Let $Q_n^{\mathrm{SBOG}}$ and $Q_{n,\mathrm{SOE}}^\star$
be the class--latent output distributions of Algorithm \ref{alg:outlier_sampling} using $\widehat E_n$ and $E$, respectively, on the same observed bank. Denote their top-$A$ anchor sets by $\widehat I_{A,n}$ and $I_A^\star$, and their energy thresholds by $\widehat\tau_n$ and $\tau_n^\star$. All remaining components and tie-breaking rules are shared. Define $e_n=\sup_{v\in\mathcal Z}|\widehat E_n(v)-E(v)|$ and $a_n=\frac{|\widehat I_{A,n}\triangle I_A^\star|}{2A}$,where $\triangle$ denotes symmetric difference. In one population-SOE proposal round, let $S=\{j:\cos(V_j,t_{y_I})\ge\tau_{\mathrm{sem}}\}$ and $J^\star=\{j\in S:E(V_j)\ge\tau_n^\star\}$. Let $\Delta_n$ be the gap between the two largest energies in $J^\star$, with $\Delta_n=\infty$ when $|J^\star|<2$. Conditional on the observed bank, define
\[b_n(t)=\Pr\!\left(\min_{j\in S}|E(V_j)-\tau_n^\star|\le2t\ \text{or}\ \Delta_n\le2t \right),
\]
and $p_n^\star=\Pr(J^\star\ne\varnothing),
$ with $\min\varnothing=\infty$.

\begin{theorem}[Population-SOE consistency]
\label{thm:sbog_population_consistency}
Suppose $|\widehat\tau_n-\tau_n^\star|\le e_n$,
$p_n^\star\ge p_0>0$, and $a_n+b_n(e_n)<p_0$.
Then the empirical sampler terminates almost surely, and
\begin{equation}
\label{eq:sbog_population_tv}
d_{\mathrm{TV}}\!\left(
Q_n^{\mathrm{SBOG}},Q_{n,\mathrm{SOE}}^\star
\right)
\le
\min\!\left\{1,\frac{2\{a_n+b_n(e_n)\}}{p_0}\right\}
=:\zeta_n,
\end{equation}
where $d_{\mathrm{TV}}(P,Q)=\sup_B|P(B)-Q(B)|$. The same bound holds after applying a shared conditional decoder. If $e_n\to0$ and $a_n+b_n(e_n)\to0$ in probability over the bank, and the threshold and acceptance conditions hold with probability tending to one, then $d_{\mathrm{TV}}(Q_n^{\mathrm{SBOG}},Q_{n,\mathrm{SOE}}^\star)\to0$ in probability.
\end{theorem}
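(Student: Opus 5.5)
We prove the bound with a coupling. The empirical sampler and the population-SOE sampler are run on the same observed bank, and we drive both with the same randomness: the same anchor draws, the same Gaussian perturbations $\eta_j$ and the same decoder noise. A single proposal round consists of drawing an anchor index $I$ and $M$ proposals $V_1,\dots,V_M$. The round is \emph{good} if three things happen: the uniform anchor draws coincide, $\min_{j\in S}|E(V_j)-\tau_n^\star|>2e_n$, and $\Delta_n>2e_n$. To couple the anchor draws, we take the maximal coupling of $\mathrm{Unif}(\widehat I_{A,n})$ and $\mathrm{Unif}(I_A^\star)$. Since $|\widehat I_{A,n}|=|I_A^\star|=A$, these two uniforms are at TV distance $a_n$, so the draws disagree with probability at most $a_n$.

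On a good round the two samplers behave identically. For $j\in S$ we have $|\widehat E_n(V_j)-E(V_j)|\le e_n$ and $|\widehat\tau_n-\tau_n^\star|\le e_n$. The margin $2e_n$ therefore forces $\widehat E_n(V_j)-\widehat\tau_n$ and $E(V_j)-\tau_n^\star$ to have the same sign, so the feasible sets $\widehat{\mathcal J}$ and $J^\star$ coincide. The semantic constraint does not involve the energy, so it is shared. If $|J^\star|\ge2$, the gap condition $\Delta_n>2e_n$ together with the uniform $e_n$-perturbation keeps the top-energy index the same under $\widehat E_n$ and $E$. If $|J^\star|=1$, the argmax is trivially the same. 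Hence on a good round the two samplers accept or reject together and output the same $(y_I,v)$. A union bound gives
\[
\Pr(\text{round not good})\le a_n+b_n(e_n)=:q .
\]

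Next we pass from one round to the output distributions. Let $R^\star$ be the population sampler's stopping round. Each round is independent and accepts with probability $p_n^\star\ge p_0$. The empirical sampler accepts with probability at least $p_0-q>0$, because on good rounds its acceptance event equals the population one. This positive per-round acceptance probability gives almost-sure termination of the empirical sampler via a geometric bound. The two outputs can differ only if some round $r\le R^\star$ is bad. If all rounds up to $R^\star$ are good, both samplers reject identically until round $R^\star$ and then accept the same point. Summing over rounds,
\[
\Pr(\text{outputs differ})\le\sum_{r\ge1}\Pr(R^\star\ge r)\,q=\mathbb E[R^\star]\,q\le q/p_0 .
\]
This already gives $d_{\mathrm{TV}}\le q/p_0\le 2q/p_0$, hence the stated $\zeta_n$, and TV is trivially at most $1$. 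The factor $2$ leaves room for a cruder step: bounding the rejection-loop outputs through the conditional-on-acceptance ratio $\Pr(\cdot,\text{acc})/\Pr(\text{acc})$ perturbed by $q$ in both numerator and denominator gives $2q/p_0$ directly, which is a fallback if a shared-randomness subtlety arises. Applying a shared conditional decoder does not increase TV, by the data-processing inequality, or simply because the same noise is used in the coupling.

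Finally, the asymptotic statement comes from applying the bound conditionally on the bank. On the event that the hypotheses hold, whose probability tends to one, $\zeta_n\le 2(a_n+b_n(e_n))/p_0\to0$ in probability, so $d_{\mathrm{TV}}\to0$ in probability. The main obstacle is the round-level invariance step: I need to verify carefully that the $2e_n$ margins, and not $e_n$, are exactly what both the threshold comparison (two perturbations of size $e_n$) and the argmax gap require, and that tie-breaking is shared so that the identification of outputs is exact.
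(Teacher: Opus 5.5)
Your proof is correct. The per-round part is the same as in the paper: the maximal coupling of the two uniform anchor laws, shared Gaussian perturbations, and the $2e_n$ margins for the threshold comparison and the argmax gap. With $a_n=|\widehat I_{A,n}\triangle I_A^\star|/(2A)$ as in the statement, the anchor total variation is exactly $a_n$. So you rightly skip the paper's band-counting step around $\xi_{A,n}$, which is only needed for the appendix's alternative definition $a_n(t)$.

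The genuine difference is the repeat-until-feasible step. The paper never couples whole runs. It shows that the one-round sub-probability laws satisfy $|\mu_n(B)-\mu_n^\star(B)|\le\eta_n$ and $|p_n-p_n^\star|\le\eta_n$. It then writes the outputs as $Q_n^{\mathrm{SBOG}}=\mu_n/p_n$ and $Q_{n,\mathrm{SOE}}^\star=\mu_n^\star/p_n^\star$ and bounds the difference of ratios, which is where the factor $2$ comes from.

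You instead couple entire trajectories round by round and bound disagreement by $\sum_{r\ge1}\Pr(R^\star\ge r,\ \text{round } r \text{ bad})$. To factor this as $q\sum_r\Pr(R^\star\ge r)$, state explicitly that $\{R^\star\ge r\}$ is determined by rounds $1,\dots,r-1$ and is therefore independent of round $r$. This holds because the coupled rounds are i.i.d., since the anchor is redrawn after each failure. Your route yields $d_{\mathrm{TV}}\le q/p_n^\star\le q/p_0$, sharper than $\zeta_n$ by a factor of $2$, at the cost of a stopping-time argument. The paper's route is purely algebraic once the one-round comparison is available, and it is exactly the fallback you describe. Your decoder and asymptotic steps coincide with the paper's.
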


The proof is given in Appendix~\ref{app:sbog_population_consistency}. This result concerns consistency with the population SOE criterion, not alignment with an independently defined target boundary.

\subsection{Sinkhorn-DRO Alignment for Boundary-Aware Representations}
\label{sec:robust_alignment}

SBOG relies on a latent ground cost whose local neighborhoods reflect both geometric proximity and semantic similarity. If the representation is trained only with pointwise alignment logits, the resulting distances may separate classes at training points but still give poorly calibrated boundary geometry for outlier generation. We therefore use the same Sinkhorn-DRO principle to robustify the alignment objective: instead of matching each sample only to its class prototype at a point, we require the alignment to remain stable under a local Sinkhorn transport neighborhood. Concretely, consider the standard alignment objective
\begin{equation}
\mathcal{L}
=
\mathbb{E}_{(x,y)\sim \mathcal{D}}
\left[
-\log
\frac{\exp\!\left(T(y)^\top z(x;\theta) / t\right)}
{\sum_{c=1}^{C}\exp\!\left(T(c)^\top z(x;\theta) / t\right)}
\right],
\end{equation}
where $z(x;\theta)=h_\theta(x)/\|h_\theta(x)\|_2$ and $\{T(c)\}_{c=1}^C$ are class prototypes. Since these logits are purely pointwise, they may not adequately reflect the boundary geometry of the in-distribution, where informative outlier signals tend to concentrate.

We incorporate local Sinkhorn geometry through a reference measure $\nu_x\in\mathcal P(\mathcal X)$ centered around $x$. Let $W_{\varepsilon}^{\nu_x}$ denote the normalized Sinkhorn OT discrepancy with reference measure $\nu_x$ and regularization $\varepsilon>0$. For class $c$, define $\ell_{\mathrm{align}}(\tilde{x},c;\theta)=-\frac{T(c)^\top z(\tilde{x};\theta)}{t}$ and $z(\tilde{x};\theta)=\frac{h_\theta(\tilde{x})}{\|h_\theta(\tilde{x})\|_2}$. We replace the nominal logit by the Sinkhorn-DRO robust logit
\[
V_c(x;\theta)
:=
-
\sup_{q\in\mathcal P(\mathcal X):W^{\nu_x}_\varepsilon(\delta_x,q)\le \rho}
\mathbb{E}_{\tilde{x}\sim q}
\!\left[
\ell_{\mathrm{align}}(\tilde{x},c;\theta)
\right],
\]
where $\delta_x$ is the point mass at $x$, and $r_{\mathrm{DRO}}>0$ is the local DRO radius. By Lemma~\ref{lem:sinkhorndro}, this is equivalently given by the corresponding one-dimensional dual with
$f(\tilde{x})=\ell_{\mathrm{align}}(\tilde{x},c;\theta)$, namely,
\[
V_c(x;\theta)
=
-\inf_{\lambda>0}
\left\{
\lambda r_{\mathrm{DRO}}
+
\lambda\varepsilon
\log
\mathbb{E}_{\tilde{x}\sim Q^{\nu_x}_{\varepsilon,x}}
\left[
\exp\!\left(
\frac{\ell_{\mathrm{align}}(\tilde{x},c;\theta)}
{\lambda\varepsilon}
\right)
\right]
\right\}.
\]
Using $\{V_c(x;\theta)\}_{c=1}^C$ in place of the nominal similarity logits yields
\begin{equation}
\mathcal{L}_{\mathrm{rob}}
=
\mathbb{E}_{(x,y)\sim\mathcal{D}}
\left[
-\log
\frac{\exp(V_y(x;\theta))}
{\sum_{c=1}^C \exp(V_c(x;\theta))}
\right].
\end{equation}

\begin{remark}
If $r_{\mathrm{DRO}}=0$ and $\nu_x=\delta_x$, then $Q^{\nu_x}_{\varepsilon,x}=\delta_x$ and $V_c(x;\theta)=
-\ell_{\mathrm{align}}(x,c;\theta)=T(c)^\top z(x;\theta)/t$. Therefore $\mathcal{L}_{\mathrm{rob}}$ reduces to the standard alignment loss. More generally, when $\nu_x$ is chosen as a local neighborhood measure and $d_{\mathcal X}$ is a metric, the robust logit aggregates local evidence through a log-sum-exp tilt, recovering neighbor- or margin-based variants as limiting cases.
\end{remark}
This alignment module makes the learned representation geometry more compatible with SBOG by using the same local Sinkhorn neighborhoods in both robust alignment and SOE. The robust logits aggregate local perturbations through the Sinkhorn transport geometry, while SOE uses the corresponding Gibbs kernel to measure support for boundary generation.
\section{Experiments}
Since synthetic outliers do not have a canonical target distribution or a universally accepted fidelity metric, we evaluate them through their downstream utility, following standard practice in the outlier-synthesis literature \cite{du2023dream,darban2025carla}. Specifically, we borrow detector-side benchmarks as a practical and task-relevant proxy: better synthesized outliers should better cover informative regions outside the in-distribution support, and therefore lead to a clearer boundary between inliers and outliers therefore improved discrimination performance \cite{hendrycks2018deep}. All experiments are conducted on a machine equipped with 13th Gen Intel(R) Core(TM) i9-13900HX CPU (24 cores) and NVIDIA A100 GPU (40GB). The experimental details are in Appendix Section \ref{app:ts-details}. The code is available at \url{https://github.com/hSun08/SBOG}.

\subsection{Time Series Generation}\label{sec:ts_gen}
We evaluate SBOG on five standard multivariate time-series anomaly detection benchmarks, including SMD \cite{smd}, MSL \cite{hundman2018detecting}, SMAP, SWaT \cite{swat}, and WADI \cite{wadi} and  baselines of COUTA \cite{xu2024calibrated}, Ts2Vec \cite{yue2022ts2vec}, Dcdetector  \cite{yang2023dcdetector}, TranAD \cite{tuli2022tranad}, AnomalyTransformer \cite{xu2022anomaly}, ScatterAD \cite{yinscatterad}, THOC \cite{NEURIPS2020_97e401a0} and Sub-Adjacent Transformer \cite{subadjacent}. We use F1 and AU-PR as the main evaluation metrics for each method. To avoid overestimating detection performance~\cite{kim2022towards}, our main tables use the stricter point-wise evaluation without Point Adjustment. 
\begin{table}[h]
\centering\vspace{-10pt}
\caption{Comparison results on SMD, MSL, SMAP, SWaT, and WADI datasets. \textbf{Bold} indicates the best result among the synthesis baselines, and $\dagger$ indicates the best result among all baselines.}
\label{tab:ts_main_results}
\setlength{\tabcolsep}{6pt} 
\renewcommand{\arraystretch}{1.15}
\resizebox{1\textwidth}{!}{
\begin{tabular}{lcccccccccccc}
\toprule
\multirow{2}{*}{Methods} 
& \multicolumn{2}{c}{SMD} 
& \multicolumn{2}{c}{MSL} 
& \multicolumn{2}{c}{SMAP} 
& \multicolumn{2}{c}{SWaT} 
& \multicolumn{2}{c}{WADI} 
& \multicolumn{2}{c}{Average} \\
\cmidrule(lr){2-3}
\cmidrule(lr){4-5}
\cmidrule(lr){6-7}
\cmidrule(lr){8-9}
\cmidrule(lr){10-11}
\cmidrule(lr){12-13}
& AU-PR & F1 & AU-PR & F1 & AU-PR & F1 & AU-PR & F1 & AU-PR & F1 & AU-PR & F1\\
\midrule
AnomalyTransformer  \cite{xu2022anomaly}                         & 0.082 & 0.137 & 0.129 & 0.218 & 0.203 & 0.307 & 0.375 & 0.531 & 0.064 & 0.145 & 0.171 & 0.268 \\
DCdetector \cite{yang2023dcdetector}                                   & 0.044 & 0.086 & 0.115 & 0.204 & 0.131 & 0.225 & 0.401 & 0.537 & 0.059 & 0.109 & 0.150 & 0.232 \\
TranAD \cite{tuli2022tranad}                                       & 0.389 & 0.450 & 0.204 & 0.314 & 0.196 & 0.308 & $0.464^{\dagger}$ & $0.684^{\dagger}$ & 0.040 & 0.109 & 0.259 & 0.373 \\
COUTA \cite{xu2024calibrated}                                       & $0.409^{\dagger}$ & $0.461^{\dagger}$ & 0.321 & 0.403 & 0.309 & 0.382 & 0.303 & 0.535 & 0.207 & 0.266 & 0.310 & 0.410 \\ 
TS2Vec  \cite{yue2022ts2vec}                                      & 0.113 & 0.186 & 0.145 & 0.253 & 0.165 & 0.276 & 0.337 & 0.532 & 0.057 & 0.114 & 0.163 & 0.272 \\
ScatterAD  \cite{yinscatterad}                                   & 0.300 & 0.360 & 0.144 & 0.234 & 0.143 & 0.251 & 0.347 & 0.532 & 0.053 & 0.121 & 0.198 & 0.299 \\
THOC \cite{NEURIPS2020_97e401a0}                                         & 0.321 & 0.381 & 0.227 & 0.322 & 0.235 & 0.333 & 0.301 & 0.538 & 0.053 & 0.125 & 0.227 & 0.340 \\
Sub-Adjacent Transformer  \cite{subadjacent}                    & 0.185 & 0.269 & 0.295 & 0.408 & 0.171 & 0.269 & 0.384 & 0.533 & $0.458^{\dagger}$ & $0.571^{\dagger}$ & 0.299 & 0.410 \\
\hline
\textit{Synthesis baselines}\\
CARLA \cite{darban2025carla}                                        & 0.236 & 0.296 & 0.337 & 0.420 & 0.304 & 0.378 & 0.402 & 0.606 & 0.058 & 0.109 & 0.267 & 0.362 \\
CARLA-RandomNeg                               & 0.219 & 0.285 & 0.334 & 0.431 & 0.274 & 0.352 & 0.319 & 0.531 & 0.028 & 0.102 & 0.235 & 0.340 \\
CARLA-$k$-NNNeg                                  & 0.371 & 0.434 & 0.250 & 0.350 & 0.390 & 0.451 & 0.402 & 0.604 & 0.045 & 0.109 & 0.291 & 0.390 \\
CARLA-KDENeg                                  &0.258 & 0.348 & 0.351 & 0.445 & 0.391 & 0.465 & 0.350 & 0.578 & 0.049 & 0.129 & 0.280 & 0.393\\
\rowcolor{gray!15} 
CARLA-SBOG                                      & \textbf{0.413} & \textbf{0.471} & $\textbf{0.352}^{\dagger}$ & $\textbf{0.463}^{\dagger}$ & $\textbf{0.449}^{\dagger}$ & $\textbf{0.515}^{\dagger}$ & \textbf{0.369} & \textbf{0.625} & \textbf{0.063} & \textbf{0.133} & $\textbf{0.312}^{\dagger}$ & $\textbf{0.432}^{\dagger}$ \\
\bottomrule
\end{tabular}\vspace{-25pt}
}
\end{table}

\vspace{-5pt}
The upper part of Table \ref{tab:ts_main_results} reports general time-series anomaly detection baselines, while CARLA~\cite{darban2025carla} is the only compatible framework that explicitly uses synthetic negatives for time-series anomaly detection. We therefore use CARLA as a controlled testbed with only the negative-sample construction rule changed. Specifically, we compare random negatives, $k$-NN based sparse negatives, KDE-based support negatives, and SBOG negatives. It is worth noting that the robust alignment module is not used in the time-series experiments because these benchmarks do not contain semantic class anchors or prototype labels that define logits $T(c)^\top z$.  Under this matched setting, CARLA-SBOG achieves the best average AU-PR and F1 among all CARLA variants. This indicates that the SBOG boundary rule produces more useful negatives for downstream anomaly detection.

\begin{figure}[h]
    \centering\vspace{-10pt}
    \includegraphics[width=1.0\linewidth]{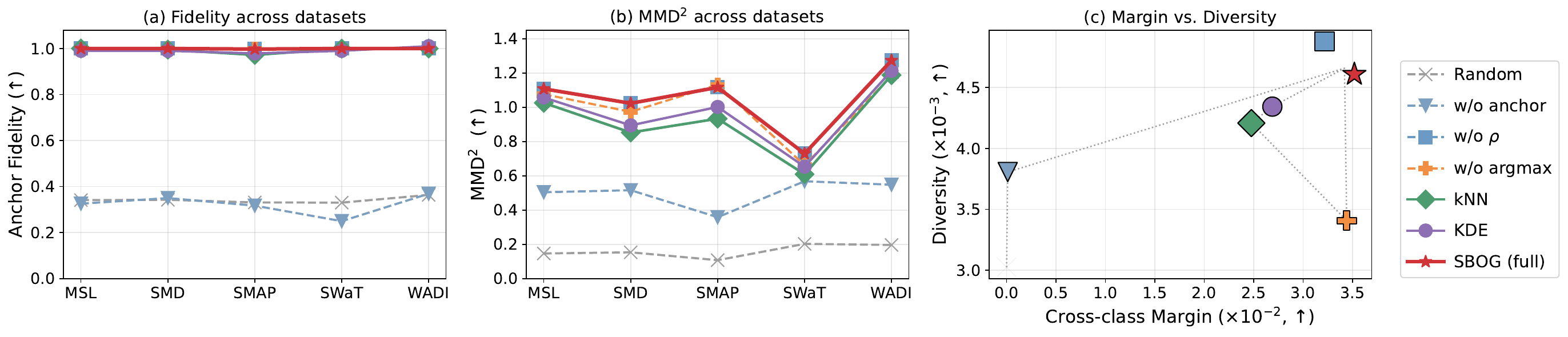}
    \caption{Component ablation of the time-series sampler.}
    \label{fig:ts-component}\vspace{-10pt}
\end{figure}
To further examine whether the gains come from the SOE sampler itself, we ablate the main mechanisms of Algorithm~1, including high-energy anchor selection, energy-threshold acceptance, and argmax candidate selection, and additionally compare with $k$-NN and KDE support-based variants. All variants use the same trained encoder, so the comparison isolates the effect of the sampler and boundary scoring rule. Panels (a) and (b) show that SBOG maintains high anchor fidelity while inducing a stronger distributional shift as measured by MMD$^2$, whereas random sampling and removing anchor selection reduce fidelity. Panel (c) further shows that SBOG achieves a better margin--diversity trade-off \begin{wraptable}{r}{0.4\linewidth}
\vspace{-10pt}
\centering
\caption{Sensitivity to the threshold $\tau_E$}\vspace{-8pt}
\label{tab:tau_sensitivity}
\small
\setlength{\tabcolsep}{3pt}
\renewcommand{\arraystretch}{1.05}
\begin{tabular}{@{}ccccc@{}}
\toprule
$\tau_E$ & Avg AUPR & Avg F1 & Gap & Fill \\
\midrule
$0.50$              & 0.279 & 0.387 & 0.225 & 1.000 \\
$0.70$              & 0.295 & 0.409 & 0.224 & 1.000 \\
$0.90$              & 0.310 & 0.430 & 0.217 & 1.000 \\
$\mathbf{0.95}$     & $\mathbf{0.312}$ & $\mathbf{0.432}$ & 0.126 & 1.000 \\
$0.98$              & 0.305 & 0.422 & 0.109 & 0.999 \\
$0.99$            & 0.277 & 0.384 & 0.079 & 0.945 \\
\bottomrule
\end{tabular}
\vspace{-18pt}
\end{wraptable} than the ablated and support-score variants: it moves generated negatives farther along the boundary while retaining high intra-negative diversity. This suggests that selecting the most energetic feasible candidate improves boundary informativeness beyond generic low-support scoring.

We sweep the threshold $\tau_E$ using the same trained encoder and report Avg AUPR and F1 together with two proposal diagnostics. Gap measures the average margin $E(\mathrm{neg})-\tau_E$ of accepted negatives, and Fill is the fraction of anchors that yield at least one feasible proposal. Table~\ref{tab:tau_sensitivity}
shows that performance is stable over a broad middle range of thresholds, while overly high thresholds reduce both the margin and feasible-anchor coverage. This is consistent with Theorem~\ref{thm_soe}: reliable thresholding requires a nontrivial energy margin and a well-conditioned support normalizer.

\subsection{Image Generation}
For a more comprehensive evaluation, we also apply our method to image outlier generation. We follow the experimental protocol used in recent image outlier synthesis works \cite{npos,du2023dream,gao2025good,liao2025bood} and the standard benchmark setting in \cite{huang2021mos} for evaluation. For fair comparison, we adopt ResNet-34 \cite{he2016deep} as the ID classifier and CIFAR-100 and ImageNet-100 \cite{deng2009imagenet} as the in-distribution datasets. Following prior baselines, we generate OOD samples with Stable Diffusion v1.5. On ImageNet-100, we use iNaturalist \cite{van2018inaturalist}, SUN \cite{xiao2010sun}, Places \cite{zhou2017places}, and Textures \cite{cimpoi2014describing} as test datasets. On CIFAR-100, we use SVHN \cite{netzer2011reading}, Places365 \cite{zhou2017places}, LSUN-R \cite{yu2015lsun}, iSUN \cite{xu2015turkergaze}, and Textures \cite{cimpoi2014describing} as test datasets.

We compare our method against representative baselines for synthesis methods about outlier detection, including GAN \cite{lee2018training}, VOS \cite{du2022towards}, NPOS \cite{npos}, DreamOOD \cite{du2023dream}, BOOD \cite{liao2025bood}, FodFoM \cite{chen2024fodfom}, and GOOD \cite{gao2025good}. Following the standard evaluation protocol used by these baselines, we assess the quality of the
generated data using FPR95, the false positive rate when the true positive rate for ID samples is $95\%$ and AUROC, the area under the receiver operating characteristic curve. The results on ImageNet-100 and CIFAR-100 are reported in Table \ref{tab:img100_ood} and Table \ref{tab:cifar100_ood}, respectively. For methods without official released code, we report the summarized results provided in recent benchmark papers \cite{gao2025good,liao2025bood}. Methods marked with an asterisk (*) do not have publicly available official implementations and are therefore not included in the formal ranking. In the tables, \textbf{bold} indicates the best result and \underline{underline} indicates the second-best result.

\begin{table*}[h]
\centering
\small
\caption{Comparative evaluation of outlier detection performance on ImageNet-100 as the ID dataset.}\vspace{-5pt}
\label{tab:img100_ood}
\resizebox{0.97\textwidth}{!}{
\begin{tabular}{lcccccccccc}
\toprule
\multirow{2}{*}{Methods} & \multicolumn{2}{c}{INATURALIST} & \multicolumn{2}{c}{PLACES} & \multicolumn{2}{c}{SUN} & \multicolumn{2}{c}{TEXTURES} & \multicolumn{2}{c}{Average}  \\
\cmidrule(lr){2-3} \cmidrule(lr){4-5} \cmidrule(lr){6-7} \cmidrule(lr){8-9} \cmidrule(lr){10-11}
& FPR95$\downarrow$ & AUROC$\uparrow$ & FPR95$\downarrow$ & AUROC$\uparrow$ & FPR95$\downarrow$ & AUROC$\uparrow$ & FPR95$\downarrow$ & AUROC$\uparrow$ & FPR95$\downarrow$ & AUROC$\uparrow$  \\
\hline
GAN  \cite{lee2018training}      & 83.10 & 71.35 & 83.20 & 69.85 & 84.40 & 67.56 & 91.00 & 59.16 & 85.42 & 66.98  \\
VOS  \cite{du2022towards}       & 43.00 & 93.77 & 47.60 & 91.77 & 39.40 & 93.17 & 66.10 & 81.42 & 49.02 & 90.03  \\
NPOS \cite{npos}    & 53.84 & 86.52 & 59.66 & 83.50 & 53.54 & 87.99 & \textbf{8.98}  & \textbf{98.13} & 44.00 & 89.04  \\
Dream-OOD \cite{du2023dream}   & 24.10 & 96.10 & 39.87 & 93.11 & 36.88 & 93.31 & 53.99 & 85.56 & 38.76 & 92.02  \\
NCIS  \cite{doorenbos2024non}   & 20.70 & 96.56 & 34.60 & 94.07 & 35.43 & 94.13 & 44.83 & 88.50 & 33.89 & 93.32  \\
BOOD* \cite{liao2025bood}       & 18.33 & 96.74 & 33.33 & 94.08 & 37.92 & 93.52 & 51.88 & 85.41 & 35.37 & 92.44  \\
GOOD* \cite{gao2025good}       & 9.22  & 97.61 & 24.79 & 94.82 & 17.20 & 96.10 & 19.51 & 96.60 & 17.68 & 96.30 \\
\rowcolor{gray!15}
SBOG        & \beststd{9.44}{0.73} & \beststd{98.09}{0.10} & \secondstd{14.40}{1.01} & \secondstd{97.09}{0.16} & \secondstd{13.22}{0.92} & \secondstd{97.61}{0.11} & \std{24.26}{1.06} & \std{94.82}{0.23} & \secondstd{15.33}{0.46} & \secondstd{96.90}{0.08} \\
\rowcolor{gray!15}
SBOG (robust) & \secondstd{9.89}{0.76} & \secondstd{97.99}{0.10} & \beststd{13.24}{0.85} & \beststd{97.21}{0.15} & \beststd{11.81}{0.99} & \beststd{97.64}{0.11} & \secondstd{24.06}{1.06} & \secondstd{95.05}{0.22} & \beststd{14.75}{0.46} & \beststd{96.97}{0.08}\\
\bottomrule
\end{tabular}
}\vspace{-10pt}
\end{table*}
The results show that SBOG achieves near state-of-the-art outlier detection performance on both CIFAR-100 and ImageNet-100. More importantly, SOE improves over several representative statistical support-based synthesis methods, especially VOS \cite{du2022towards} and Dream-OOD~\cite{du2023dream}, which sample virtual outliers from low-likelihood regions, and NPOS~\cite{npos},  which is the cloest non-parametric density estimation baseline. The results support the central design of SBOG: SOE identifies informative weak-support boundary regions, while semantic anchoring controls class drift. The SBOG (robust) variant uses the Sinkhorn-DRO alignment module from Section \ref{sec:robust_alignment}. Its small but consistent improvement suggests that calibrating the latent representation with the same Sinkhorn geometry used by SOE can improve the boundary candidates generated by SBOG.

\begin{table}[t]
\centering
\caption{Comparative evaluation of outlier detection performance on CIFAR-100 as the ID dataset.}
\label{tab:cifar100_ood}
\resizebox{0.98\textwidth}{!}{
\begin{tabular}{lcccccccccccc}
\toprule
\multirow{2}{*}{Methods} 
& \multicolumn{2}{c}{SVHN} 
& \multicolumn{2}{c}{PLACES365} 
& \multicolumn{2}{c}{LSUN-R} 
& \multicolumn{2}{c}{iSUN} 
& \multicolumn{2}{c}{TEXTURES} 
& \multicolumn{2}{c}{Average} \\
\cmidrule(lr){2-3} \cmidrule(lr){4-5} \cmidrule(lr){6-7} \cmidrule(lr){8-9} \cmidrule(lr){10-11} \cmidrule(lr){12-13}
& FPR95$\downarrow$ & AUROC$\uparrow$
& FPR95$\downarrow$ & AUROC$\uparrow$
& FPR95$\downarrow$ & AUROC$\uparrow$
& FPR95$\downarrow$ & AUROC$\uparrow$
& FPR95$\downarrow$ & AUROC$\uparrow$
& FPR95$\downarrow$ & AUROC$\uparrow$ \\
\hline 
GAN \cite{lee2018training} & 89.45 & 66.95 & 88.75 & 66.76 & 82.35 & 75.87 & 83.45 & 73.49 & 92.80 & 62.99 & 87.36 & 69.21 \\
VOS \cite{du2022towards} & 78.50 & 73.11 & 84.55 & 75.85 & 59.05 & 85.72 & 72.45 & 82.66 & 75.35 & 80.08 & 73.98 & 79.48 \\
NPOS \cite{npos} & 11.14 & 97.84 & 79.08 & 71.30 & 56.27 & 82.43 & 51.72 & 85.48 & 35.20 & 92.44 & 46.68 & 85.90 \\
DreamOOD \cite{du2023dream} & 58.75 & 87.01 & 70.85 & 79.94 & 24.25 & 95.23 & 1.10 & 99.73 & 46.60 & 88.82 & 40.31 & 90.15 \\
BOOD*  \cite{liao2025bood} & 5.42 & 98.43 & 40.55 & 90.76 & 2.06 & 99.25 & 0.22 & 99.91 & 5.10 & 98.74 & 10.67 & 97.42\\
FodFoM \cite{chen2024fodfom}          & 33.19 & 94.02 & \textbf{42.30} & \textbf{90.68} & 28.24 & 95.09 & 33.06 & 94.45 & 35.44 & 93.38 & 34.45 & 93.52 \\
GOOD*  \cite{gao2025good} & 19.37 & 95.34 & 65.85 & 82.04 & 18.82 & 94.15 & 0.81  & 99.59 & 23.47 & 95.98 & 25.66 & 93.34 \\
\rowcolor{gray!15}
SBOG   & \beststd{5.38}{0.64}  & \secondstd{98.67}{0.10} & \secondstd{47.48}{1.70} & \secondstd{89.70}{0.34} & \secondstd{3.48}{0.52}  & \beststd{99.13}{0.06} & \beststd{0.10}{0.07} & \beststd{99.95}{0.01} & \secondstd{5.74}{0.61}  & \secondstd{98.69}{0.11} & \beststd{12.44}{0.39}  & \beststd{97.23}{0.08} \\
\rowcolor{gray!15}
SBOG (robust) & \secondstd{5.49}{0.57}  & \beststd{98.69}{0.09} & \std{50.60}{1.51} & \std{88.28}{0.39} & \beststd{2.96}{0.43}  & \secondstd{99.10}{0.07} & \secondstd{0.35}{0.13} & \secondstd{99.88}{0.02} & \beststd{5.46}{0.54}  & \beststd{98.78}{0.10} & \secondstd{12.97}{0.36}  & \secondstd{96.95}{0.05}\\
\hline
\end{tabular}
}\vspace{-8pt}
\end{table}

\begin{wraptable}{r}{0.4\textwidth}
\centering\vspace{-10pt}
\caption{Anchor-selection ablation.}\vspace{-6pt}
\label{tab:anchor_ablation}
\small
\setlength{\tabcolsep}{3.5pt}
\begin{tabular}{cccc}
\toprule
Strategy &  Token cos. & Diversity & Yield \\
\midrule
$k$-NN  & 0.490 & 0.635 & 0.277 \\
KDE-RBF  & 0.464 & 0.626 & 0.272 \\
SBOG  & \textbf{0.691}& \textbf{0.835} & \textbf{0.490} \\
\bottomrule
\end{tabular}
\vspace{-12pt}
\end{wraptable} 
We first test whether the boundary component of SBOG can be replaced by generic support scores.  We replace the SOE anchor rule with two statistical alternatives, $k$-NN sparsity and KDE-RBF scoring, while keeping the remaining proposal, semantic filtering, and decoding steps fixed.  Table~\ref{tab:anchor_ablation} shows that SOE gives better token alignment, diversity, and feasible yield. This indicates that the transport boundary energy selects more useful anchors than purely statistical support scores, even when the rest of the SBOG pipeline is unchanged.

\begin{figure}[h]\vspace{-10pt}
    \centering
    \includegraphics[width=0.92\linewidth]{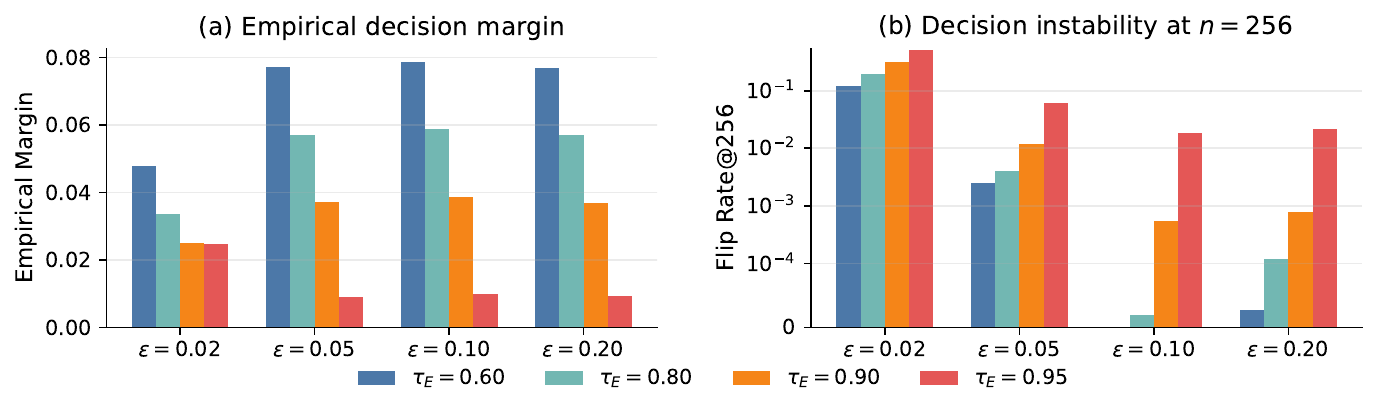}
    \caption{Finite-sample stability of the outlier-energy decision under varying $\epsilon$ and $m$.}
    \label{fig:sensitivity}
\end{figure}

\begin{wraptable}{r}{0.34\linewidth}
\vspace{-10pt}
\centering
\caption{Effect of semantic anchoring.}
\label{tab:candidate_quality}
\small
\setlength{\tabcolsep}{3pt}
\begin{tabular}{ccc}
\toprule
Metric & w/o filter & SBOG \\
\midrule
Purity $\uparrow$        & 0.381  & 0.824 \\
Token cos. $\uparrow$    & 0.329  & 0.602 \\
Class margin $\uparrow$  & -0.074 & 0.220 \\
\bottomrule
\end{tabular}
\vspace{-10pt}
\end{wraptable}
We next study the calibration of the SOE acceptance contour.  Using the same trained encoder, we sweep the threshold $\tau_E$ and report the empirical margin together with flip rate. Figure~\ref{fig:sensitivity} shows that very small $\varepsilon$ makes the energy landscape sharp and unstable, while moderate smoothing yields larger margins and lower flip rates. This supports the use of SOE as a smooth boundary criterion inside SBOG and is consistent with the finite-sample stability condition in Theorem~\ref{thm_soe}.

\begin{wrapfigure}{r}{0.46\linewidth}
    \centering
    \includegraphics[width=\linewidth]{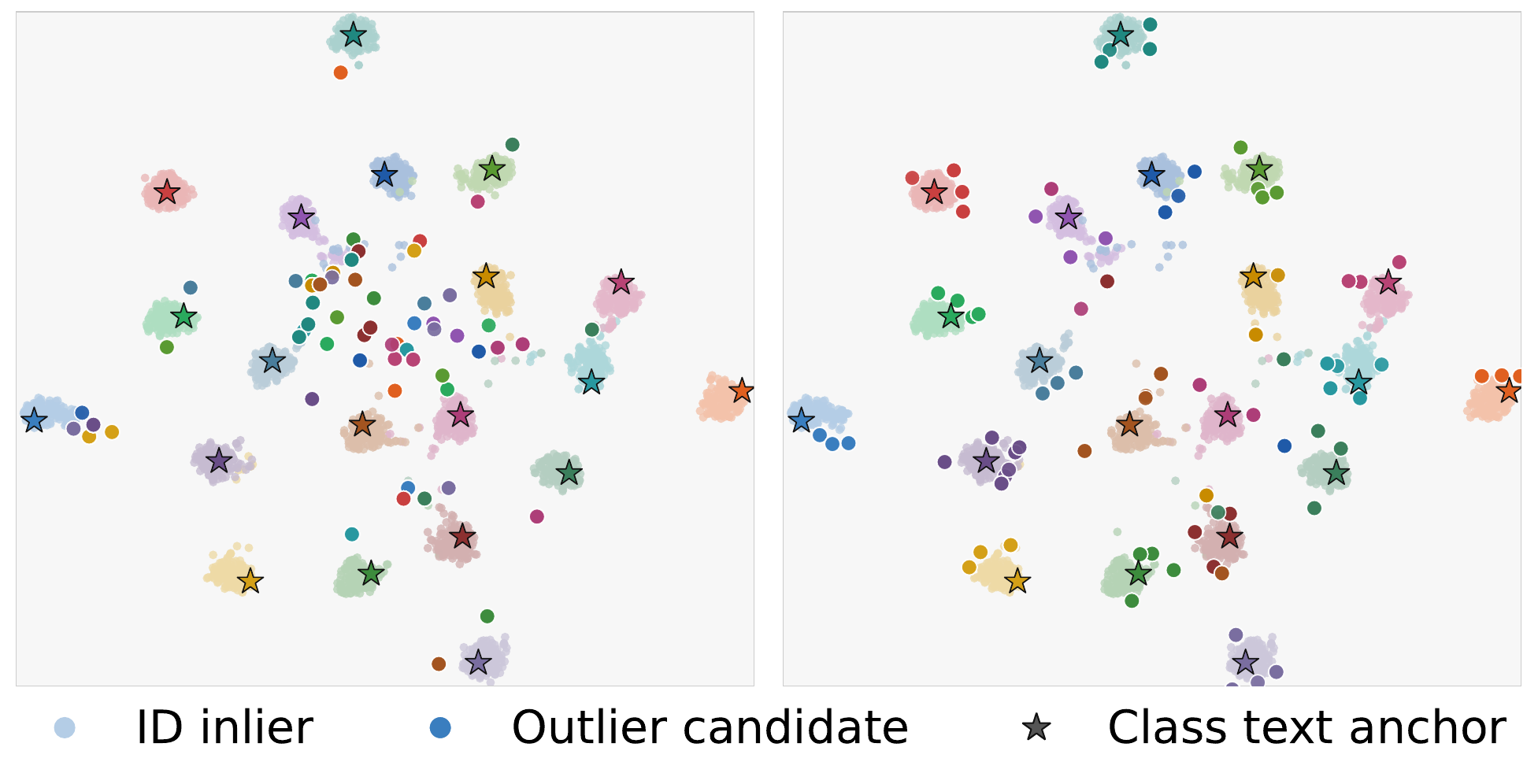}
    \caption{Visualization of anchors.}
    \label{fig:placeholder}
    \vspace{-12pt}
\end{wrapfigure} 

Finally, we isolate the role of semantic anchoring. 
The previous ablation shows that SOE provides stronger boundary anchors than $k$-NN or KDE, but weak support alone does not guarantee class consistency. 
We then compare an unanchored candidate pool with the token-anchored SBOG pool, using the same downstream ranking rule and evaluating the top $250$ candidates. Table~\ref{tab:candidate_quality} shows that semantic anchoring improves intended-class purity, token alignment, cross-class margin, and contamination control.  The t-SNE \cite{JMLR:v9:vandermaaten08a} in Figure~\ref{fig:placeholder} visualizes the same effect: anchored SBOG candidates remain near intended class boundaries, whereas unanchored candidates often mix with other class clusters or are absorbed into them.
These ablations separate the two roles in SBOG: SOE improves boundary selection over purely statistical support scores, while semantic anchoring prevents the selected weak-support candidates from drifting off class.

\section{Conclusion}
In this paper, we proposed Sinkhorn Boundary Outlier Generation (SBOG), a structured framework for latent-space outlier generation. SBOG uses Sinkhorn Outlier Energy (SOE) as a transport-support energy to identify weakly supported boundary regions, and combines it with semantic feasibility and decoding constraints to generate controlled, class-consistent outliers. We showed that SOE recovers kernel support as an empirical Euclidean special case, while its appearance in the Sinkhorn-DRO dual gives the boundary rule an optimization-consistent interpretation. Experiments on time-series anomaly generation and image outlier synthesis demonstrate that SBOG can generate informative synthetic outliers across modalities and improve downstream robustness evaluation.

SBOG depends on representation quality, transport-cost design, semantic anchoring, and generator expressiveness. Weak embeddings or limited decoders may reduce outlier fidelity and utility. Future work can develop adaptive calibration strategies for the Sinkhorn temperature and SOE threshold, jointly learn the latent representation, transport cost, and outlier generator, and extend the Sinkhorn-DRO interpretation toward downstream-aware stress generation.

\bibliography{ref}
\bibliographystyle{plain}
\clearpage
\appendix

\section{Related works} \label{supp_relatedworks}
\paragraph{DRO for representation} Classical DRO and its Wasserstein variants learn models against worst-case shifts through ambiguity sets over distributions \cite{shafieezadeh2019regularization,duchi2021learning,sinha2018certifying}, later extended to local perturbations and entropic Sinkhorn formulations \cite{wang2025sinkhorn}. A closer line uses DRO to shape robust representations under subgroup or covariate shift, for example by learning hierarchically robust features \cite{jeong2025multiexpert}, analyzing robust versus spurious representations \cite{kim2025sufficient,sun2023alternating}, or coupling invariant representation learning with worst-group objectives \cite{qian2020hierarchically,zhou2021examining,sunrobust}. More recent work connects DRO to self-supervised and latent representation learning, but they are mostly concerned about interpretation \cite{wu2023understanding} or designing loss function for robustness in worst-case \cite{ghosh2024distributionally}. However, even in those work consider the outlier or negative points explicitly \cite{hu2018does,Sagawa2020Distributionally,izmailov2022feature,pmlr-v151-slowik22a}, DRO is still used mainly to reweight losses or stabilize learned features thereby stabilize features and working on loss function refinement. Instead, our framework turns Sinkhorn-DRO geometry into an explicit and rigorous definition of outlyingness in latent space, yielding a training-consistent criterion for outlier identification and synthesis that is naturally more general than modality-specific generation pipelines.

\paragraph{Outlier in time series}
Most existing work on time-series outliers is still detection-oriented \cite{NEURIPS2020_97e401a0,liu2024the,zamanzadeh2024deep}, focusing on learning a reliable notion of normality rather than explicitly modeling how outliers are formed. Typical approaches either learn compact normal regions, sometimes with weak anomaly supervision to calibrate the boundary \cite{pmlr-v80-ruff18a,pang2019deep, Ruff2020Deep,xu2024calibrated}, or strengthen detector backbones through reconstruction-based scoring \cite{usad,garg2021evaluation,tuli2022tranad}, contextual comparison \cite{ijcai2022p394}, and representation learning with attention or contrastive objectives \cite{xu2022anomaly,yang2023dcdetector}. Recent work has mostly strengthened detector backbones instead of rethinking outlier formation, by improving temporal structure modeling \cite{wu2022timesnet,licrossad}, memory modules \cite{song2023memto}, hybrid prototypes \cite{shen2025learn}, and exploiting masking or frequency-domain learning \cite{10597757,wucatch}. This trend continues in newer interpretable and foundation-style detectors \cite{chostructured,lan2025towards,yinscatterad,martinez2026adaptive,zhang2026icdiffad} and. By contrast, work that explicitly generates time-series outliers remains scarce. Most anomaly simulation is only auxiliary to detector training \cite{darban2025carla,xu2024calibrated}, while dedicated synthesis methods are still rare \cite{darban2025genias,app14177714}. Relative to these baselines, our focus is not merely to separate normal from abnormal windows, but to define and synthesize informative outliers from a principled latent geometry, making the generated samples directly useful for further downstream evaluation and analysis.

\paragraph{Image outlier synthesis}
For image data, the literature on synthetic outliers has mostly been driven by OOD detection rather than outlier generation itself. Early training-based methods improved detectors by using curated auxiliary negatives to regularize classifiers rather than explicitly generating new outliers \cite{hendrycks2018deep,liu2020energy}. Later work began to synthesize outliers in feature space \cite{lee2018training}, either with learned generators or with virtual sampling schemes \cite{du2022towards,npos,li2025outlier}, which improves boundary regularization but still defines candidate outliers through empirical density or sampling heuristics. More recent methods further push synthesis to the image level: \cite{du2023dream} decodes low-likelihood latent samples with diffusion, while \cite{yoon2025diffusion,chen2024fodfom,liao2025bood,gao2025good} generate more realistic visual outliers through diffusion or foundation-model guidance. Despite the stronger visual quality, most of these methods still localize candidate outliers through heuristic proxies such as low-likelihood regions \cite{du2023dream}, sparse $k$-NN neighborhoods \cite{npos,gao2025good}, or boundary-crossing perturbations \cite{liao2025bood}, rather than from a principled definition of outlyingness for detection. In contrast, our method is more general and not restricted to certain modality. Besides, we provide a rigorous definition of outlyingness through Sinkhorn geometric boundary, so outlier selection and synthesis are both grounded in a principled geometry rather than heuristic and purely statistic rules.

\section{Discussion of Sinkhorn Boundary Outlier Energy}
We record the variational identity underlying the Sinkhorn outlier energy in
Definition~\ref{def:outlier_energy}. For any $v\in\mathcal Z$, recall that
\[
K^\nu_\varepsilon(v)
:=
\int_{\mathcal Z}
\exp\!\left(-\frac{d(v,u)}{\varepsilon}\right)d\nu(u),
\qquad
E^\nu_\varepsilon(v)
:=
-\varepsilon\log K^\nu_\varepsilon(v).
\]
Assume $0<K^\nu_\varepsilon(v)<\infty$. Then
\begin{equation}
\inf_{r \in \mathcal P(\mathcal Z)}
\left\{
\mathbb E_{u \sim r}[d(v,u)] + \varepsilon \mathrm{KL}(r\|\nu)
\right\}
=
-\varepsilon \log K^\nu_\varepsilon(v)
=
E^\nu_\varepsilon(v).
\label{eq:soe-variational}
\end{equation}
Therefore, $E^\nu_\varepsilon(v)$ is exactly the optimal value of the entropic
cost--KL tradeoff centered at $v$ relative to the reference measure $\nu$. Larger
values of $E^\nu_\varepsilon(v)$ correspond to smaller kernel mass under $\nu$,
meaning that $v$ is less supported by the reference geometry and is therefore more
likely to be an outlier under the proposed energy criterion.

Fix $v \in \mathcal Z$, and define the Gibbs-tilted probability measure
$r_{v,\varepsilon}^{\nu,\star}\in\mathcal P(\mathcal Z)$ by
\begin{equation*}
dr_{v,\varepsilon}^{\nu,\star}(u)
:=
\frac{\exp(-d(v,u)/\varepsilon)}{K^\nu_\varepsilon(v)}d\nu(u).
\end{equation*}
Then for any $r \ll \nu$,
\begin{align*}
\mathrm{KL}(r\|r_{v,\varepsilon}^{\nu,\star})
&=
\int_{\mathcal Z}
\log\!\left(\frac{dr}{dr_{v,\varepsilon}^{\nu,\star}}\right)dr \\
&=
\int_{\mathcal Z}
\log\!\left(\frac{dr}{d\nu}\right)dr
+
\int_{\mathcal Z}
\frac{d(v,u)}{\varepsilon}dr(u)
+
\log K^\nu_\varepsilon(v) \\
&=
\mathrm{KL}(r\|\nu)
+
\frac{1}{\varepsilon}\mathbb E_{u\sim r}[d(v,u)]
+
\log K^\nu_\varepsilon(v).
\end{align*}
Rearranging gives
\begin{equation*}
\mathbb E_{u\sim r}[d(v,u)] + \varepsilon\mathrm{KL}(r\|\nu)
=
-\varepsilon \log K^\nu_\varepsilon(v)
+
\varepsilon\mathrm{KL}(r\|r_{v,\varepsilon}^{\nu,\star})
\ge
-\varepsilon \log K^\nu_\varepsilon(v),
\end{equation*}
with equality if and only if $r = r_{v,\varepsilon}^{\nu,\star}$. This proves
\eqref{eq:soe-variational}. Hence the definition
\[
E^\nu_\varepsilon(v)
=
-\varepsilon \log K^\nu_\varepsilon(v)
\]
is exactly the entropic variational cost associated with $v$ relative to the
reference measure $\nu$.

Finally, the thresholded outlier set
\[
O_{\nu,\varepsilon}(\tau_E)
:=
\{v \in \mathcal Z : E^\nu_\varepsilon(v) \ge \tau_E\}
\]
is equivalently characterized by
\[
K^\nu_\varepsilon(v) \le e^{-\tau_E/\varepsilon}.
\]

Similarly, we give the proof of Lemma \ref{lem:soe_support_cost}.

\textbf{Lemma 3.2} (Entropic support cost of Sinkhorn outlier energy)
\textit{For any $z\in Z$ such that $0<K^\nu_\epsilon(z)<\infty$, given the Sinkhorn outlier
energy $E^\nu_\epsilon(z)$, in particular, for the empirical reference measure
$\hat\nu_n=\frac1n\sum_{i=1}^n\delta_{z_i}$ with distinct atoms, we have
\[
E^{\hat\nu_n}_\epsilon(z)
=
\min_{w\in\Delta_n}
\left\{
\sum_{i=1}^n w_i d(z,z_i)
+
\epsilon\sum_{i=1}^n w_i\log(nw_i)
\right\},
\]
where $\Delta_n=\{w\in\mathbb R_+^n:\sum_{i=1}^n w_i=1\}$ with the minimizer being $w_i^\star(z) =\frac{\exp\!\left(-d(z,z_i)/\epsilon\right)}
{\sum_{j=1}^n \exp\!\left(-d(z,z_j)/\epsilon\right)}$.
}
\begin{proof}
Fix $z\in Z$ with $0<K^\nu_\epsilon(z)<\infty$, and define
\[
\frac{dr_z^\star}{d\nu}(u)
=
\frac{\exp\!\left(-d(z,u)/\epsilon\right)}
{K^\nu_\epsilon(z)} .
\]
This is a probability density with respect to $\nu$. For any
$r\in\mathcal P(Z)$ with $r\ll\nu$, using the chain rule for Radon--Nikodym
derivatives gives
\[
\log\frac{dr}{dr_z^\star}(u)
=
\log\frac{dr}{d\nu}(u)
+
\frac{d(z,u)}{\epsilon}
+
\log K^\nu_\epsilon(z).
\]
Therefore, in the extended-real sense,
\[
\epsilon\mathrm{KL}(r\|r_z^\star)
=
\epsilon\mathrm{KL}(r\|\nu)
+
\mathbb E_{u\sim r}[d(z,u)]
+
\epsilon\log K^\nu_\epsilon(z).
\]
Rearranging yields
\[
\mathbb E_{u\sim r}[d(z,u)]
+
\epsilon\mathrm{KL}(r\|\nu)
=
-\epsilon\log K^\nu_\epsilon(z)
+
\epsilon\mathrm{KL}(r\|r_z^\star).
\]
Since $\mathrm{KL}(r\|r_z^\star)\ge 0$, the right-hand side is minimized by
$r=r_z^\star$, and the minimum value is
\[
-\epsilon\log K^\nu_\epsilon(z)
=
E^\nu_\epsilon(z).
\]

For $\hat\nu_n=\frac1n\sum_{i=1}^n\delta_{z_i}$ with distinct atoms, any
$r\ll\hat\nu_n$ can be written as $r=\sum_{i=1}^n w_i\delta_{z_i}$ for some
$w\in\Delta_n$. In this case,
\[
\mathrm{KL}(r\|\hat\nu_n)
=
\sum_{i=1}^n w_i\log(nw_i),
\]
and substituting this expression into the variational formula gives the empirical
form. The expression for $w_i^\star(z)$ follows by substituting
$\nu=\hat\nu_n$ into the Gibbs reweighting above.
\end{proof}

\textbf{Proposition 3.3} (Outlier energy appears in the Sinkhorn-DRO dual)
\textit{Following the previous definitions, for any measurable $f:\mathcal Z\to\mathbb R$ such that the log-moment below is finite for $\lambda>0$, the Sinkhorn-DRO dual objective admits the following
decomposition: for every $\lambda>0$,
\begin{align*}
V_D(\lambda)
&=
\lambda \tau_E
+
\lambda\varepsilon
\mathbb E_{z\sim\mu}
\left[
\log
\mathbb E_{u\sim Q_{\varepsilon,z}^{\nu}}
\exp\!\left(
\frac{f(u)}{\lambda\varepsilon}
\right)
\right]
\nonumber\\
&=
\lambda \tau_E
+
\mathbb E_{z\sim\mu}
\left[
\lambda\varepsilon
\log
\int_{\mathcal Z}
\exp\!\left(
\frac{f(u)}{\lambda\varepsilon}
-
\frac{d(z,u)}{\varepsilon}
\right)
d\nu(u)
+
\lambda E^\nu_\varepsilon(z)
\right],
\label{eq:VD-decomp}
\end{align*}
where $Q_{\varepsilon,z}^{\nu}$ is the Gibbs kernel and $E^\nu_\varepsilon(z)= -\varepsilon\log K^\nu_\varepsilon(z)$ is the Sinkhorn outlier energy from Definition~\ref{def:outlier_energy}. In particular, minimizing the Sinkhorn-DRO objective necessarily optimizes an objective that contains the outlier energy term $\lambda\mathbb E_{z\sim\mu}\!\left[E^\nu_\varepsilon(z)\right]$.}

\begin{proof}
Fix $\lambda>0$. By definition of the Gibbs kernel from Lemma~\ref{lem:sinkhorndro},
\[
dQ_{\varepsilon,v}^{\nu}(u)
=
\frac{\exp\!\big(-d(z,u)/\varepsilon\big)}
{K^\nu_\varepsilon(z)}
d\nu(u),
\qquad
K^\nu_\varepsilon(z)
=
\int_{\mathcal Z}
\exp\!\big(-d(z,u)/\varepsilon\big)d\nu(u).
\]
For each fixed representation $z\in\mathcal Z$, expand the inner expectation:
\begin{align*}
\mathbb E_{u\sim Q_{\varepsilon,z}^{\nu}}
\exp\!\left(\frac{f(u)}{\lambda\varepsilon}\right)
&=
\int_{\mathcal Z}
\exp\!\left(\frac{f(u)}{\lambda\varepsilon}\right)
dQ_{\varepsilon,z}^{\nu}(u)\\
&=
\int_{\mathcal Z}
\exp\!\left(\frac{f(u)}{\lambda\varepsilon}\right)
\frac{\exp\!\big(-d(z,u)/\varepsilon\big)}
{K^\nu_\varepsilon(z)}
d\nu(u)\\
&=
\frac{1}{K^\nu_\varepsilon(z)}
\int_{\mathcal Z}
\exp\!\left(
\frac{f(u)}{\lambda\varepsilon}
-
\frac{d(z,u)}{\varepsilon}
\right)d\nu(u).
\end{align*}
Taking $\log$ gives
\[
\log
\mathbb E_{u\sim Q_{\varepsilon,z}^{\nu}}
\exp\!\left(\frac{f(u)}{\lambda\varepsilon}\right)
=
\log\!\int_{\mathcal Z}
\exp\!\left(
\frac{f(u)}{\lambda\varepsilon}
-
\frac{d(z,u)}{\varepsilon}
\right)d\nu(u)
-
\log K^\nu_\varepsilon(z).
\]
Multiplying by $\lambda\varepsilon$ yields
\[
\lambda\varepsilon
\log
\mathbb E_{u\sim Q_{\varepsilon,z}^{\nu}}
\exp\!\left(\frac{f(u)}{\lambda\varepsilon}\right)
=
\lambda\varepsilon
\log\!\int_{\mathcal Z}
\exp\!\left(
\frac{f(u)}{\lambda\varepsilon}
-
\frac{d(z,u)}{\varepsilon}
\right)d\nu(u)
-
\lambda\varepsilon\log K^\nu_\varepsilon(z).
\]
By Definition~\ref{def:outlier_energy}, $E^\nu_\varepsilon(z)=-\varepsilon\log K^\nu_\varepsilon(z)$, and hence
\[
-\lambda\varepsilon\log K^\nu_\varepsilon(z)
=
\lambda E^\nu_\varepsilon(z).
\]
Substituting this identity into the Sinkhorn-DRO dual objective from 
Lemma~\ref{lem:sinkhorndro} and taking $\mathbb E_{v\sim\mu}[\cdot]$ proves the 
decomposition \eqref{eq:VD-decomp}. The final statement follows because the 
normalization term defining $E^\nu_\varepsilon(v)$ appears explicitly in this 
decomposition of the robust log-moment term.
\end{proof}

\section{Comparison of Sample Complexity}
We give discussion about sample complexity about neighborbood selection and our Sinkhorn outlier energy. To avoid confounding the comparison with pipeline differences, we keep all components fixed, and vary only the boundary scoring rule used in the sampling step.  We first take $k$-NN as an example, then formalize the sample complexity by a hard neighborhood-based rule.
\label{app_thm_neighbor}

\begin{theorem}[Sample Complexity of Hard Neighborhood Selection]
\label{thm:knn_sample_complexity}
For any integer $k\ge 1$ and any $v\in\mathcal Z$, define the empirical $k$-NN radius by 
\begin{equation}
\widehat r_{n,k}(v) := \inf\left\{r>0 : \widehat{\nu}_n(B(v,r)) \ge \frac{k}{n}\right\},
\qquad
B(v,r) := \{u \in \mathcal Z : d(v,u) \le r\}.
\end{equation} Assume that there exist constants $C>0$, $m>0$, and $r_0>0$ such that $\bar{\nu}(B(v,r))\le Cr^m$ for all $v\in\mathcal V$ and $ r\in(0,r_0]$, where $\mathcal V\subseteq\mathcal Z$ is the set of candidate query points. Then for any fixed $v\in\mathcal V$ and any $0<\Delta_r\le r_0$, if $n < \frac{k}{2C\Delta_r^m}$, then $\mathbb P\!\left(\widehat r_{n,k}(v)>\Delta_r\right)\ge1-e^{-k/6}$.
\end{theorem}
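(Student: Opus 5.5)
The plan is to reduce the event on the radius to a binomial lower-tail event and then apply a multiplicative Chernoff bound. Here $\widehat\nu_n=\frac1n\sum_i\delta_{z_i}$ with $z_i$ drawn i.i.d.\ from $\bar\nu$, and $v\in\mathcal V$ is fixed.

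\textbf{Reduction.} The map $r\mapsto\widehat\nu_n(B(v,r))$ is nondecreasing and right-continuous, since the balls are closed. Therefore $\widehat r_{n,k}(v)\le\Delta_r$ holds if and only if $\widehat\nu_n(B(v,\Delta_r))\ge k/n$. Indeed, if the count at $\Delta_r$ were below $k$, right-continuity would keep it below $k$ on some interval $(\Delta_r,\Delta_r+\eta)$, forcing the infimum to exceed $\Delta_r$. Define $N:=n\,\widehat\nu_n(B(v,\Delta_r))=\sum_{i=1}^n\mathbf 1\{d(v,z_i)\le\Delta_r\}$. Then $N\sim\mathrm{Bin}(n,p)$ with $p=\bar\nu(B(v,\Delta_r))\le C\Delta_r^m$, using $\Delta_r\le r_0$. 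Hence
\[
\mathbb P(\widehat r_{n,k}(v)>\Delta_r)=\mathbb P(N\le k-1)\ge\mathbb P(N<k).
\]

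\textbf{Mean bound.} The assumption $n<k/(2C\Delta_r^m)$ gives $\mathbb E N=np\le nC\Delta_r^m<k/2$.

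\textbf{Chernoff step.} This is the step needing care. I need $\mathbb P(N\ge k)\le e^{-k/6}$, but $\mathbb E N$ may be much smaller than $k/2$. To handle this, dominate $N$ stochastically by $N'\sim\mathrm{Bin}(n,p')$ with $p'\ge p$ chosen so that $\mu':=np'=k/2$. This is possible because $k/(2n)>C\Delta_r^m\ge p$, and $p'\le1$ holds whenever $n\ge k/2$. If $n<k/2$, then $N\le n<k$ almost surely and the claim is trivial.

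Binomial monotonicity in $p$ (via a coupling) gives $\mathbb P(N\ge k)\le\mathbb P(N'\ge k)=\mathbb P(N'\ge(1+\delta)\mu')$ with $\delta=1$. The standard upper-tail bound $\mathbb P(X\ge(1+\delta)\mu)\le\exp(-\delta^2\mu/(2+\delta))$ then yields $\exp(-\mu'/3)=e^{-k/6}$. Alternatively one can use the form of the Chernoff bound valid for any upper bound $\mu_H\ge\mathbb E N$, which gives the same result directly.

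\textbf{Conclusion.} Combining the steps, $\mathbb P(\widehat r_{n,k}(v)>\Delta_r)\ge1-e^{-k/6}$.
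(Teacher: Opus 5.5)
Your proposal is correct and follows essentially the same route as the paper: identify $\{\widehat r_{n,k}(v)\le\Delta_r\}$ with the binomial event $\{N\ge k\}$, bound the mean by $nC\Delta_r^m<k/2$, and apply the multiplicative Chernoff upper-tail bound. The only difference is cosmetic---the paper handles a mean much smaller than $k/2$ directly with the $\delta\ge 1$ form of Chernoff, $\mathbb P(N\ge k)\le\exp(-(k-\mu_v)/3)\le e^{-k/6}$, so your stochastic-domination step is valid but not needed, while your right-continuity argument justifies the event identity that the paper only asserts.
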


\begin{proof}
Fix $v\in\mathcal V$ and define $N_n(v,r)
:=\sum_{i=1}^n \mathbf 1\{z_i\in B(v,r)\}$. Then $N_n(v,r)$ is a binomial random variable with success probability $p_v(r):=\bar{\nu}(B(v,r))$. By assumption, for every $r\in(0,r_0]$, $p_v(r)\le Cr^m$. Fix $0<\Delta_r\le r_0$. If $n < \frac{k}{2C\Delta_r^m}$, then
\begin{equation}
\mu_v
:=
\mathbb E[N_n(v,\Delta_r)]
=
np_v(\Delta_r)
\le
nC\Delta_r^m
<
\frac{k}{2}.
\end{equation}
By the definition of $\widehat r_{n,k}(v)$, $\{\widehat r_{n,k}(v)\le \Delta_r\}=\{N_n(v,\Delta_r)\ge k\}$.
Since $\mu_v<k/2$, we have $k\ge 2\mu_v$. Applying the multiplicative Chernoff
bound for the upper tail of a binomial random variable gives
\begin{equation}
\mathbb P\!\left(N_n(v,\Delta_r)\ge k\right)
\le
\exp\!\left(-\frac{k-\mu_v}{3}\right)
\le
e^{-k/6}.
\end{equation}
Therefore,
\[
\mathbb P\!\left(\widehat r_{n,k}(v)\le \Delta_r\right)
\le
e^{-k/6},
\]
or equivalently,
\[
\mathbb P\!\left(\widehat r_{n,k}(v)>\Delta_r\right)
\ge
1-e^{-k/6}.
\]
This proves the claim for the fixed query point $v\in\mathcal V$.
\end{proof}

Theorem~\ref{thm:knn_sample_complexity} shows that, under the local mass upper bound
$\bar{\nu}(B(v,r)) \le C r^m$, a hard $k$-NN rule cannot reliably resolve a neighborhood scale
$\Delta_r$ unless the sample size is on the order of $k\Delta_r^{-m}$. This reflects the cost of
estimating a local counting or radius event in weakly supported regions. The SOE rule is based on a
different statistical object: for a fixed proposal pool, it asks whether the smooth Gibbs support cost
crosses a threshold $\tau_E$. The next theorem gives a finite-candidate stability guarantee for this
threshold decision under explicit normalizer and margin conditions.

We next prove Theorem \ref{thm_soe}.

\textbf{Theorem 3.4} (Finite-sample stability of SOE thresholding) \textit{For a fixed finite candidate pool $\mathcal V \subset \mathcal Z$, define $\underline K:=\inf_{v \in \mathcal V}K^{\bar{\nu}}_\varepsilon(v) > 0$, $\Delta_E=\inf_{v \in \mathcal V}\left|E^{\bar{\nu}}_\varepsilon(v) - \tau_E\right| > 0$, and $\tau_n(\delta)=\sqrt{{\log(2|\mathcal V|/\delta)}/{2n}}$. If $\tau_n(\delta) \le \underline K/2$, then with probability at least $1-\delta$,
\[
\sup_{v \in \mathcal V}
\left|
E^{\widehat{\nu}_n}_\varepsilon(v)
-
E^{\bar{\nu}}_\varepsilon(v)
\right|
\le
\frac{2\varepsilon}{\underline K}\tau_n(\delta).
\]
Consequently, if $\tau_n(\delta)\le\min\!\left\{\frac{\underline K}{2},\frac{\underline K \Delta_E}{4\varepsilon}\right\}$, then with probability at least $1-\delta$, $\mathbf 1\{E^{\widehat{\nu}_n}_\varepsilon(v) \ge \tau_E\}=\mathbf 1\ E^{\bar{\nu}}_\varepsilon(v) \ge \tau_E\}$ for any $v \in \mathcal V$. In particular, it suffices that
\[
n \ge
\max\!\left\{
\frac{2}{\underline K^2}\log\frac{2|\mathcal V|}{\delta},
\;
\frac{8\varepsilon^2}{\underline K^2\Delta_E^2}
\log\frac{2|\mathcal V|}{\delta}
\right\}.
\]}
\begin{proof}
Fix a finite candidate pool $\mathcal V \subset \mathcal Z$, and define $\psi_v(u):=\exp\!\left(-\frac{d(v,u)}{\varepsilon}\right)$. Since $d(v,u) \ge 0$, we have $0 < \psi_v(u) \le 1$ for all
$u \in \mathcal Z$. By the definition of $K^\nu_\varepsilon$,
\begin{equation*}
K^{\bar{\nu}}_\varepsilon(v)
=
\int_{\mathcal Z}
\exp\!\left(-\frac{d(v,u)}{\varepsilon}\right)d\bar{\nu}(u),
\qquad
K^{\widehat{\nu}_n}_\varepsilon(v)
=
\frac{1}{n}\sum_{i=1}^n
\exp\!\left(-\frac{d(v,z_i)}{\varepsilon}\right).
\end{equation*}
Therefore, for each fixed $v \in \mathcal V$, the quantity
$K^{\widehat{\nu}_n}_\varepsilon(v)$ is the empirical mean of $n$ i.i.d.
random variables in $[0,1]$ with mean $K^{\bar{\nu}}_\varepsilon(v)$.

By Hoeffding's inequality, for any $t>0$ and any fixed $v \in \mathcal V$,
\begin{equation*}
\mathbb P\!\left(
\left|
K^{\widehat{\nu}_n}_\varepsilon(v)
-
K^{\bar{\nu}}_\varepsilon(v)
\right| > t
\right)
\le
2e^{-2nt^2}.
\end{equation*}
Applying the union bound over $v \in \mathcal V$ gives
\begin{equation*}
\mathbb P\!\left(
\sup_{v \in \mathcal V}
\left|
K^{\widehat{\nu}_n}_\varepsilon(v)
-
K^{\bar{\nu}}_\varepsilon(v)
\right| > t
\right)
\le
2|\mathcal V|e^{-2nt^2}.
\end{equation*}
Choosing $t = \tau_n(\delta)$ yields
\begin{equation*}
\mathbb P\!\left(
\sup_{v \in \mathcal V}
\left|
K^{\widehat{\nu}_n}_\varepsilon(v)
-
K^{\bar{\nu}}_\varepsilon(v)
\right|
\le
\tau_n(\delta)
\right)
\ge
1-\delta.
\end{equation*}
Hence, with probability at least $1-\delta$, the event $\mathcal E_n:=\left\{\sup_{v \in \mathcal V}\left|K^{\widehat{\nu}_n}_\varepsilon(v)-K^{\bar{\nu}}_\varepsilon(v)\right|\le\tau_n(\delta)\right\}$
holds. We now work on $\mathcal E_n$. Since $\underline K:=\inf_{v \in \mathcal V} K^{\bar{\nu}}_\varepsilon(v) > 0$, the condition $\tau_n(\delta) \le \underline K/2$ implies
\begin{equation*}
K^{\widehat{\nu}_n}_\varepsilon(v)
\ge
K^{\bar{\nu}}_\varepsilon(v) - \tau_n(\delta)
\ge
\underline K - \tau_n(\delta)
\ge
\frac{\underline K}{2},
\qquad \forall v \in \mathcal V.
\end{equation*}
Also, since $0 < \psi_v(u) \le 1$, we have
$K^{\bar{\nu}}_\varepsilon(v) \le 1$ and
$K^{\widehat{\nu}_n}_\varepsilon(v) \le 1$. Thus both
$K^{\bar{\nu}}_\varepsilon(v)$ and $K^{\widehat{\nu}_n}_\varepsilon(v)$
lie in $[\underline K/2,1]$ for every $v \in \mathcal V$.

Consider the function $\phi(x):=-\varepsilon\log x$ on $(0,\infty)$.
Its derivative satisfies $\phi'(x)=-\varepsilon/x$, so on
$[\underline K/2,1]$ we have
\[
|\phi'(x)| \le \frac{2\varepsilon}{\underline K}.
\]
Hence $\phi$ is $(2\varepsilon/\underline K)$-Lipschitz on
$[\underline K/2,1]$. Using the definition of $E^\nu_\varepsilon$, for every
$v \in \mathcal V$,
\begin{equation*}
\left|
E^{\widehat{\nu}_n}_\varepsilon(v)
-
E^{\bar{\nu}}_\varepsilon(v)
\right|
=
\left|
\phi(K^{\widehat{\nu}_n}_\varepsilon(v))
-
\phi(K^{\bar{\nu}}_\varepsilon(v))
\right|
\le
\frac{2\varepsilon}{\underline K}
\left|
K^{\widehat{\nu}_n}_\varepsilon(v)
-
K^{\bar{\nu}}_\varepsilon(v)
\right|.
\end{equation*}
Taking the supremum over $\mathcal V$ and using $\mathcal E_n$, we obtain
\begin{equation*}
\sup_{v \in \mathcal V}
\left|
E^{\widehat{\nu}_n}_\varepsilon(v)
-
E^{\bar{\nu}}_\varepsilon(v)
\right|
\le
\frac{2\varepsilon}{\underline K}\tau_n(\delta),
\end{equation*}
which proves the first claim.

For the second claim, assume in addition that $\frac{2\varepsilon}{\underline K}\tau_n(\delta)\le\frac{\Delta_E}{2}$ for $\Delta_E:=\inf_{v \in \mathcal V}\left|E^{\bar{\nu}}_\varepsilon(v)-\tau_E \right| > 0$. Fix any $v \in \mathcal V$, if
$E^{\bar{\nu}}_\varepsilon(v) \ge \tau_E$, then by the definition of
$\Delta_E$, we have
\[
E^{\bar{\nu}}_\varepsilon(v)-\tau_E
=
\left|
E^{\bar{\nu}}_\varepsilon(v)-\tau_E
\right|
\ge
\Delta_E,
\]
hence $E^{\bar{\nu}}_\varepsilon(v) \ge \tau_E+\Delta_E$. Using the uniform energy bound proved above,
\begin{equation*}
E^{\widehat{\nu}_n}_\varepsilon(v)\ge E^{\bar{\nu}}_\varepsilon(v)-\left|E^{\widehat{\nu}_n}_\varepsilon(v)-E^{\bar{\nu}}_\varepsilon(v)\right|\ge(\tau_E+\Delta_E)-\frac{\Delta_E}{2}=\tau_E+\frac{\Delta_E}{2}>\tau_E.
\end{equation*}
Therefore, $\mathbf 1\{E^{\widehat{\nu}_n}_\varepsilon(v) \ge \tau_E\}=1$.

If instead $E^{\bar{\nu}}_\varepsilon(v) < \tau_E$, then $\tau_E-E^{\bar{\nu}}_\varepsilon(v)=\left|E^{\bar{\nu}}_\varepsilon(v)-\tau_E\right|\ge\Delta_E$, so $E^{\bar{\nu}}_\varepsilon(v) \le \tau_E-\Delta_E$. Again using the uniform energy bound,
\begin{equation*}
E^{\widehat{\nu}_n}_\varepsilon(v)
\le
E^{\bar{\nu}}_\varepsilon(v)
+
\left|
E^{\widehat{\nu}_n}_\varepsilon(v)
-
E^{\bar{\nu}}_\varepsilon(v)
\right|
\le
(\tau_E-\Delta_E)+\frac{\Delta_E}{2}
=
\tau_E-\frac{\Delta_E}{2}
<
\tau_E.
\end{equation*}
Therefore,
\[
\mathbf 1\{E^{\widehat{\nu}_n}_\varepsilon(v) \ge \tau_E\}=0.
\]

Thus, on the event $\mathcal E_n$,
\begin{equation*}
\mathbf 1\{E^{\widehat{\nu}_n}_\varepsilon(v) \ge \tau_E\}
=
\mathbf 1\{E^{\bar{\nu}}_\varepsilon(v) \ge \tau_E\},
\qquad \forall v \in \mathcal V.
\end{equation*}
Since $\mathbb P(\mathcal E_n)\ge 1-\delta$, the conclusion follows with
probability at least $1-\delta$.

Finally, the conditions $\tau_n(\delta) \le \frac{\underline K}{2}$ and $\frac{2\varepsilon}{\underline K}\tau_n(\delta)\le\frac{\Delta_E}{2}$ are implied by
\begin{equation*}
\tau_n(\delta)
\le
\min\!\left\{
\frac{\underline K}{2},
\frac{\underline K\Delta_E}{4\varepsilon}
\right\}.
\end{equation*}
Rearranging these two inequalities yields the sufficient sample-size bound
\begin{equation*}
n \ge
\max\!\left\{
\frac{2}{\underline K^2}\log\frac{2|\mathcal V|}{\delta},
\;
\frac{8\varepsilon^2}{\underline K^2\Delta_E^2}
\log\frac{2|\mathcal V|}{\delta}
\right\}.
\end{equation*}
This completes the proof.
\end{proof}

\section{Population-SOE consistency of the sampler}
\label{app:sbog_population_consistency}

We give the definitions and proof for Theorem~\ref{thm:sbog_population_consistency}. Unless stated otherwise, probabilities in this subsection are conditional on the observed bank and the fixed representation. The empirical and population-SOE samplers both use Algorithm \ref{alg:outlier_sampling}, including redrawing the source anchor after an unsuccessful round. The population-SOE sampler replaces the energy evaluations, not the observed anchor candidates, by their population counterparts.

\paragraph{Threshold calibration.}
Let $s_i=E(z_i)$ and $\widehat s_i=\widehat E_n(z_i)$. Write the thresholds as $\tau_n^\star=\mathcal T_n(s_1,\ldots,s_n)$ and $\widehat\tau_n=\mathcal T_n(\widehat s_1,\ldots,\widehat s_n)$, where the common calibration map satisfies
\begin{equation}
\label{eq:sbog_threshold_lipschitz}
|\mathcal T_n(u)-\mathcal T_n(v)|
\le \|u-v\|_\infty.
\end{equation}
A common fixed threshold satisfies this condition. So does a fixed empirical quantile, computed by an order statistic or a fixed interpolation of adjacent order statistics, plus a common additive margin. Indeed, perturbing every score by at most $t$ perturbs every order statistic, and hence such a quantile, by at most $t$.

\paragraph{Anchor-selection margin.}
Let $1\le A=A_n\le n$. Let $I_A^\star$ and $\widehat I_{A,n}$ contain
the indices of the top-$A$ population and empirical bank energies,
respectively, using the same deterministic tie-breaking rule.
For $A<n$, sort the population energies as
$s_{(1)}\ge\cdots\ge s_{(n)}$ and define
\[
\xi_{A,n}=\frac{s_{(A)}+s_{(A+1)}}{2},
\qquad
a_n(t)=\min\!\left\{1,
\frac{1}{2A}\sum_{i=1}^n
\mathbf 1\{|s_i-\xi_{A,n}|\le2t\}\right\}.
\]
For $A=n$, set $a_n(t)=0$.
This quantity bounds the fraction of the uniform anchor law that can
change under an energy perturbation of size $t$.

\paragraph{Proposal-selection margin.}
A population proposal round first draws
$I\sim\operatorname{Unif}(I_A^\star)$ and then draws
\[
V_j=\Pi_{\mathcal Z}(z_I+\eta_j),
\qquad
\eta_j\overset{\mathrm{i.i.d.}}{\sim}
\mathcal N(0,\sigma^2 I_m),
\qquad j=1,\ldots,M.
\]
Let $R_n^\star$ denote the joint law of $(I,V_1,\ldots,V_M)$.
Define the semantic and population-feasible index sets by $S=\{j:\cos(V_j,t_{y_I})\ge\tau_{\mathrm{sem}}\}$ and $J^\star=\{j\in S:E(V_j)\ge\tau_n^\star\}$.s
Let $\Delta_n$ be the difference between the largest and
second-largest values of $E(V_j)$ over $j\in J^\star$.
Set $\Delta_n=+\infty$ when $|J^\star|\le1$, and use
$\min\varnothing=+\infty$. Define
\begin{align}
\label{eq:sbog_proposal_margin}
b_n(t)
&=R_n^\star\!\left(
 \left\{\min_{j\in S}|E(V_j)-\tau_n^\star|\le2t\right\}
 \cup\{\Delta_n\le2t\}\right),\\
\label{eq:sbog_round_success}
p_n^\star&=R_n^\star(J^\star\ne\varnothing).
\end{align}
The factor $2t$ in the threshold event accounts for errors in both
the candidate energy and the calibrated threshold. The second event
accounts for instability of the energy-maximizing feasible proposal.
No semantic-margin assumption is needed because the coupled samplers
use the same semantic test on the same proposals.

\textbf{Theorem 3.5} (Population-SOE consistency) \textit{Suppose $|\widehat\tau_n-\tau_n^\star|\le e_n$,
$p_n^\star\ge p_0>0$, and $a_n+b_n(e_n)<p_0$.
Then the empirical sampler terminates almost surely, and
\begin{equation*}
\label{eq:sbog_population_tv}
d_{\mathrm{TV}}\!\left(
Q_n^{\mathrm{SBOG}},Q_{n,\mathrm{SOE}}^\star
\right)
\le
\min\!\left\{1,\frac{2\{a_n+b_n(e_n)\}}{p_0}\right\}
=:\zeta_n,
\end{equation*}
where $d_{\mathrm{TV}}(P,Q)=\sup_B|P(B)-Q(B)|$. The same bound holds after applying a shared conditional decoder. If $e_n\to0$ and $a_n+b_n(e_n)\to0$ in probability over the bank, and the threshold and acceptance conditions hold with probability tending to one, then $d_{\mathrm{TV}}(Q_n^{\mathrm{SBOG}},Q_{n,\mathrm{SOE}}^\star)\to0$ in probability.}

\begin{proof}
Fix a bank satisfying the theorem's conditions and set
$\eta_n=a_n(t)+b_n(t)$.
We first bound disagreement in one proposal round and then account
for repeated rejection.

\emph{Anchor selection.}
Let $r=|\widehat I_{A,n}\setminus I_A^\star|
=|I_A^\star\setminus\widehat I_{A,n}|$.
The total variation distance between the two uniform anchor laws is
$r/A$. When $r>0$, take any
$j\in\widehat I_{A,n}\setminus I_A^\star$ and
$k\in I_A^\star\setminus\widehat I_{A,n}$.
The empirical ranking gives $\widehat s_j\ge\widehat s_k$.
Therefore,
\[
s_k-s_j
\le |s_k-\widehat s_k|+|\widehat s_j-s_j|
\le2t.
\]
Since $s_k\ge s_{(A)}\ge\xi_{A,n}$ and
$s_j\le s_{(A+1)}\le\xi_{A,n}$, both indices lie in the band
$|s_i-\xi_{A,n}|\le2t$.
All $2r$ differing indices lie in this band, so
\[
d_{\mathrm{TV}}\!\left(
\operatorname{Unif}(\widehat I_{A,n}),
\operatorname{Unif}(I_A^\star)\right)
=\frac rA\le a_n(t).
\]
The claim is immediate when $A=n$.
Couple the two anchor draws so that they differ with probability at
most $a_n(t)$. Such a coupling assigns the common probability mass
to identical indices and couples the remaining mass separately.

\emph{Feasibility and maximization.}
When the anchors agree, use the same Gaussian perturbations and
projection in both samplers. Their semantic sets then agree.
By \eqref{eq:sbog_threshold_lipschitz},
$|\widehat\tau_n-\tau_n^\star|\le t$.
Thus, for every shared proposal,
\[
\left|
(\widehat E_n(V_j)-\widehat\tau_n)
-(E(V_j)-\tau_n^\star)
\right|\le2t.
\]
Outside the threshold event in
\eqref{eq:sbog_proposal_margin}, the feasible sets agree.
If this common set contains at least two candidates and
$\Delta_n>2t$, its population maximizer $j^\star$ satisfies
\[
\widehat E_n(V_{j^\star})-\widehat E_n(V_j)
\ge E(V_{j^\star})-E(V_j)-2t>0
\qquad(j\ne j^\star).
\]
Hence the selected candidate agrees as well.
Agreement is immediate when the feasible set has at most one element.

Let $W_n$ and $W^\star$ be the outputs of this coupled round,
recording the accepted pair $(y_I,V_j)$ or a failure symbol $\bot$.
The preceding argument and a union bound give
\[
\Pr(W_n\ne W^\star)\le a_n(t)+b_n(t)=\eta_n.
\]
Consequently, for every measurable set $B$ of class--latent pairs,
\[
|\mu_n(B)-\mu_n^\star(B)|\le\eta_n,
\qquad
|p_n-p_n^\star|\le\eta_n,
\]
where $\mu_n(B)=\Pr(W_n\in B)$,
$\mu_n^\star(B)=\Pr(W^\star\in B)$,
and $p_n=\Pr(W_n\ne\bot)$.

\emph{Repeat-until-feasible sampling.}
Since $p_n\ge p_n^\star-\eta_n\ge p_0-\eta_n>0$,
independent proposal rounds terminate almost surely.
Summing over the number of failed rounds gives
\[
Q_n^{\mathrm{SBOG}}(B)=\frac{\mu_n(B)}{p_n},
\qquad
Q_{n,\mathrm{SOE}}^\star(B)=\frac{\mu_n^\star(B)}{p_n^\star}.
\]
Therefore,
\begin{align*}
\left|
Q_n^{\mathrm{SBOG}}(B)-Q_{n,\mathrm{SOE}}^\star(B)
\right|
&\le
\frac{|\mu_n(B)-\mu_n^\star(B)|}{p_n^\star}
+\frac{\mu_n(B)}{p_n}
 \frac{|p_n-p_n^\star|}{p_n^\star}\\
&\le\frac{2\eta_n}{p_n^\star}
\le\frac{2\eta_n}{p_0}.
\end{align*}
Taking the supremum over $B$ and using
$d_{\mathrm{TV}}\le1$ proves
\eqref{eq:sbog_population_tv}.
The convergence statement follows on the stated events.
For this statement, assign an arbitrary output law on exceptional
banks where the sampler is undefined; their probability vanishes
under the stated assumptions.

\emph{Decoding.}
Let $D(dx\mid c,v)$ be the shared conditional decoder, including
$p(dx\mid v)$ as a special case.
For every measurable input-space set $B$, the function
$D(B\mid c,v)$ lies in $[0,1]$. Integration of such a function
against the difference of two probability measures is bounded by
their total variation distance. Hence
\[
d_{\mathrm{TV}}\!\left(
Q_n^{\mathrm{SBOG}}D,Q_{n,\mathrm{SOE}}^\star D
\right)
\le d_{\mathrm{TV}}\!\left(
Q_n^{\mathrm{SBOG}},Q_{n,\mathrm{SOE}}^\star
\right).
\]
This proves the decoded-distribution claim.
\end{proof}

\paragraph{A sufficient uniform estimation condition.}
We now give one setting in which the required uniform energy error
vanishes. Fix the encoder before drawing an i.i.d. reference bank
$z_1,\ldots,z_n\sim\bar\nu$. Suppose
$\mathcal Z\subset\mathbb R^m$ is compact, $d\ge0$, and
\[
|d(v,u)-d(w,u)|\le L_d\|v-w\|_2
\quad\text{for all }u,v,w\in\mathcal Z.
\]
Assume $K_0=\inf_{v\in\mathcal Z}K_{\varepsilon}^{\bar\nu}(v)>0$.
Let $\mathcal N_{\mathcal Z}(h)$ be the size of a finite Euclidean
$h$-net of $\mathcal Z$, and define
\[
u_n(h,\delta)=
\sqrt{\frac{\log\{2\mathcal N_{\mathcal Z}(h)/\delta\}}{2n}}
+\frac{2L_dh}{\varepsilon}.
\]
If $u_n(h,\delta)\le K_0/2$, then, with probability at least
$1-\delta$,
\begin{equation}
\label{eq:sbog_uniform_energy_bound}
\sup_{v\in\mathcal Z}|\widehat E_n(v)-E(v)|
\le e_n(h,\delta)
:=\frac{2\varepsilon}{K_0}u_n(h,\delta).
\end{equation}

To prove this claim, note that
$u\mapsto\exp\{-d(v,u)/\varepsilon\}$ takes values in $[0,1]$.
Hoeffding's inequality and a union bound over an $h$-net control the
kernel-support error on that net by
$\sqrt{\log\{2\mathcal N_{\mathcal Z}(h)/\delta\}/(2n)}$.
The kernel is $L_d/\varepsilon$-Lipschitz in its first argument.
Moving from any query to a net point contributes at most
$2L_dh/\varepsilon$ to the difference between empirical and
population supports. Thus their uniform difference is at most
$u_n(h,\delta)$.
Both supports are then at least $K_0/2$, where
$x\mapsto-\varepsilon\log x$ is $2\varepsilon/K_0$-Lipschitz.
This proves \eqref{eq:sbog_uniform_energy_bound}.
Because the bound is uniform over $\mathcal Z$, it also applies to
the data-dependent anchor locations.

For the normalized latent space, one may use
$\mathcal N_{\mathcal Z}(h)\le(1+2/h)^m$.
Taking $h=n^{-1}$ and $\delta=n^{-2}$, with fixed
$m,\varepsilon,L_d,K_0$, gives
$e_n=O(\sqrt{\log n/n})$.
Consistency also requires the selection-margin terms to vanish.
For example, if, on events of probability tending to one,
\[
a_n(t)+b_n(t)\le C_{\mathrm{sel}}t^{\alpha}+v_n,
\qquad
\alpha>0,\quad v_n\to0,\quad p_n^\star\ge p_0>0,
\]
for all sufficiently small $t$, then the theorem gives
\[
d_{\mathrm{TV}}\!\left(
Q_n^{\mathrm{SBOG}},Q_{n,\mathrm{SOE}}^\star
\right)
\le\frac{2}{p_0}
\left(C_{\mathrm{sel}}e_n^{\alpha}+v_n\right)\to0
\]
on the corresponding events. These margin and acceptance conditions
are additional assumptions, not consequences of uniform energy
estimation alone.
The concentration calculation above concerns an i.i.d. global
reference bank; it does not by itself establish the same bound for
data-dependent trimming, local-reference restrictions, or an encoder
fitted on the same reference sample.

\paragraph{Boundary precision and coverage.}
The following consequence separates population-SOE consistency from
alignment with an independently specified target boundary.

\begin{corollary}[Conditional boundary precision and coverage]
\label{cor:sbog_boundary_coverage}
Fix a bank for which
$d_{\mathrm{TV}}(Q_n^{\mathrm{SBOG}},
Q_{n,\mathrm{SOE}}^\star)\le\zeta_n$.
For any measurable target boundary region $\mathcal B$,
\[
Q_n^{\mathrm{SBOG}}(\mathcal B)
\ge Q_{n,\mathrm{SOE}}^\star(\mathcal B)-\zeta_n.
\]
In particular, define $\operatorname{BPrec}(Q)=Q(\mathcal B)$.
If $Q_{n,\mathrm{SOE}}^\star(\mathcal B)=1$, then
$\operatorname{BPrec}(Q_n^{\mathrm{SBOG}})\ge1-\zeta_n$.

Let $\mathcal B_1,\ldots,\mathcal B_L$ be measurable target regions
such that
$\min_{\ell\le L}Q_{n,\mathrm{SOE}}^\star(\mathcal B_\ell)
\ge q_{\min}>\zeta_n$.
For $N$ independent outputs $U_1,\ldots,U_N$ from the fitted
empirical sampler, conditional on this bank,
\[
\Pr\!\left(
\text{every }\mathcal B_\ell\text{ contains at least one }U_j
\,\middle|\,\text{bank}\right)
\ge 1-L\exp\{-N(q_{\min}-\zeta_n)\}.
\]
The same statements hold for the decoded laws and input-space
regions when the common decoder is applied.
\end{corollary}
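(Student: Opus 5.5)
The corollary has two parts: a one-sided precision bound and a coverage bound for many independent draws. For the precision bound I will use only the definition $d_{\mathrm{TV}}(P,Q)=\sup_B|P(B)-Q(B)|$. Take $B=\mathcal B$ in the hypothesis $d_{\mathrm{TV}}(Q_n^{\mathrm{SBOG}},Q_{n,\mathrm{SOE}}^\star)\le\zeta_n$ and drop the absolute value on the favourable side. This gives $Q_n^{\mathrm{SBOG}}(\mathcal B)\ge Q_{n,\mathrm{SOE}}^\star(\mathcal B)-\zeta_n$. Substituting $Q_{n,\mathrm{SOE}}^\star(\mathcal B)=1$ then yields $\operatorname{BPrec}(Q_n^{\mathrm{SBOG}})\ge 1-\zeta_n$.

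For coverage, I condition on the bank. Under this conditioning, Theorem~\ref{thm:sbog_population_consistency} shows the empirical sampler terminates almost surely, so $U_1,\ldots,U_N$ are i.i.d.\ from $Q_n^{\mathrm{SBOG}}$. Fix a region $\ell$. By the first part, $q_\ell:=Q_n^{\mathrm{SBOG}}(\mathcal B_\ell)\ge q_{\min}-\zeta_n>0$. By independence, the probability that no $U_j$ falls in $\mathcal B_\ell$ is $(1-q_\ell)^N$. Using $1-x\le e^{-x}$, this is at most $\exp\{-N(q_{\min}-\zeta_n)\}$. A union bound over $\ell=1,\ldots,L$ bounds the probability that some region is missed by $L\exp\{-N(q_{\min}-\zeta_n)\}$, and taking complements gives the stated inequality.

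For the decoded statements, I will use the decoding step already proved in Theorem~\ref{thm:sbog_population_consistency}. Applying the shared kernel $D$ does not increase total variation, so $d_{\mathrm{TV}}(Q_n^{\mathrm{SBOG}}D,Q_{n,\mathrm{SOE}}^\star D)\le\zeta_n$. Decoding each $U_j$ independently through $D$ keeps the outputs i.i.d.\ conditional on the bank. Both arguments above therefore go through verbatim with input-space regions in place of latent regions.

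There is no real obstacle. The only point needing care is the independence of the $N$ outputs conditional on the bank. This follows because Algorithm~\ref{alg:outlier_sampling} uses fresh randomness, both anchor draws and Gaussian proposals, for each $k$. The energies, anchors and threshold depend only on the bank, so the $U_j$ are conditionally i.i.d.
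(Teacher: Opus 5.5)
Your proof is correct and follows the paper's argument essentially step for step: you read the precision bound directly off the definition of total variation, you get coverage from conditional independence together with $(1-q_\ell)^N\le e^{-Nq_\ell}\le\exp\{-N(q_{\min}-\zeta_n)\}$ and a union bound over the $L$ regions, and you handle the decoded version with the decoder contraction from Theorem~\ref{thm:sbog_population_consistency}. You do not actually need that theorem for termination, since the corollary's hypothesis already presupposes that $Q_n^{\mathrm{SBOG}}$ is a well-defined output law and it stipulates independent outputs.
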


\begin{proof}
The precision and per-region mass bounds follow directly from the
definition of total variation. Conditional independence gives
\[
\Pr(U_j\notin\mathcal B_\ell\text{ for all }j\mid\text{bank})
=\{1-Q_n^{\mathrm{SBOG}}(\mathcal B_\ell)\}^N
\le\exp\{-N(q_{\min}-\zeta_n)\}.
\]
A union bound over the $L$ regions proves the coverage statement.
The decoded version follows from the decoder contraction established
in Theorem~\ref{thm:sbog_population_consistency}.
\end{proof}

The corollary is conditional on the fitted bank; outputs that share
this bank are not asserted to be independent unconditionally.
The coverage step uses only total variation and the assumed region
masses. It does not establish that the SOE-selected regions coincide
with a task-defined boundary.
Also, $Q_{n,\mathrm{SOE}}^\star$ depends on the observed bank.
Convergence to a fixed, sample-independent law
$Q_{\mathrm{SOE}}^\star$ would additionally require convergence of
these population-SOE counterparts. For example, an additional bound
$d_{\mathrm{TV}}(Q_{n,\mathrm{SOE}}^\star,
Q_{\mathrm{SOE}}^\star)\le\lambda_n\to0$ would give the bound
$\zeta_n+\lambda_n$ to that fixed law by the triangle inequality.

\section{Controlled Numerical Validation}
\label{sec:controlled_validation}

Downstream detection performance provides only indirect evidence of generation quality. We therefore first examine whether SBOG generates boundary outliers in a controlled setting where the target boundary is known and defined independently of SOE. The population oracle
uses three stable vector autoregressive (VAR) processes with perturbations $\Delta b_c$, $\Delta\Sigma_c$, and $\Delta A_c$ to their local means, innovation covariances, and transition dynamics, respectively. Using the corresponding exact population log-likelihood ratio $r_{c,k}(x)$, the target boundary band is $0<r_{c,k}(x)\le\delta_{c,k}$, where the indices identify the process and perturbation, and $\delta_{c,k}$ determines the selected band. This oracle is distinct from the population-SOE sampler in Theorem~\ref{thm:sbog_population_consistency}.

We compare SBOG with RandomNeg, kNNNeg, and KDENeg. We report boundary precision (BPrec), semantic validity (SemValid), average and worst-region boundary coverage (BCov and WorstCov), sliced Wasserstein-2 distance to the oracle (SW2), and detection AUPR on held-out oracle samples. Together, these measurements assess boundary alignment, coverage, and downstream utility.

\begin{table}[htbp]
\centering
\caption{Controlled boundary-generation evaluation against a
method-independent population oracle.}
\label{tab:population_oracle}
\footnotesize
\setlength{\tabcolsep}{2.8pt}
\renewcommand{\arraystretch}{1.12}
\begin{tabular}{@{}lcccccc@{}}
\toprule
Method
& BPrec$\uparrow$
& SemValid$\uparrow$
& BCov$\uparrow$
& WorstCov$\uparrow$
& \shortstack{SW2-to-\\oracle$\downarrow$}
& \shortstack{Oracle\\AUPR$\uparrow$} \\
\midrule
RandomNeg
& 0.229$\pm$0.027
& 0.950$\pm$0.001
& 0.091$\pm$0.036
& 0.011$\pm$0.004
& 0.2349$\pm$0.0034
& 0.455$\pm$0.005 \\
kNNNeg
& 0.298$\pm$0.036
& 0.977$\pm$0.014
& 0.033$\pm$0.005
& 0.002$\pm$0.002
& 0.1453$\pm$0.0062
& 0.684$\pm$0.014 \\
KDENeg
& 0.305$\pm$0.043
& 0.972$\pm$0.011
& 0.070$\pm$0.008
& 0.005$\pm$0.003
& 0.1489$\pm$0.0089
& 0.830$\pm$0.012 \\
SBOG
& \textbf{0.629}$\pm$\textbf{0.032}
& \textbf{0.983}$\pm$\textbf{0.013}
& \textbf{0.236}$\pm$\textbf{0.006}
& \textbf{0.012}$\pm$\textbf{0.003}
& \textbf{0.1405}$\pm$\textbf{0.0069}
& \textbf{0.896}$\pm$\textbf{0.052} \\
\bottomrule
\end{tabular}
\end{table}

Table~\ref{tab:population_oracle} shows that SBOG places more samples in the independently defined boundary band and achieves greater average boundary coverage than the competing generators, while maintaining high semantic validity. Its generated samples also support higher AUPR on held-out oracle data. Worst-region coverage nevertheless remains low, so these results do not establish uniform coverage of all boundary regions.
The experiment provides direct evidence of boundary alignment in this controlled setting, complementing rather than directly verifying the population-SOE consistency result.

\section{Time-Series Experimental Details}
\label{app:ts-details}

\subsection{Pipeline parameters}
\label{app:ts-params}

We keep the CARLA \cite{darban2025carla} pipeline fixed throughout
Section \ref{sec:ts_gen} and only swap the negative-sample construction module.
All five datasets (SMD, MSL, SMAP, SWaT, WADI) share the same encoder
architecture and optimization schedule; the only per-dataset quantity is the
input channel count $C$, which equals the number of variates in the
benchmark.

\paragraph{Pretext stage (representation learning).}
We use a 1-D ResNet encoder with mid-channel width $4$ and an MLP projection
head of output dimension $4$, so the latent space $\mathcal{Z}$ is a $4$-D
unit sphere after $\ell_2$ normalization.
Pretext training uses the contrastive objective with temperature
$\tau_{\mathrm{ctr}}=0.4$, batch size $50$, AdamW with $\mathrm{lr}=10^{-3}$ and weight decay $10^{-2}$ with cosine schedule for $30$ epochs, and the augmentation pipeline of
\cite{darban2025carla} with noise $\sigma_{\mathrm{aug}}=0.01$. The
pseudo-label space has $K=10$ classes obtained by the standard CARLA
neighborhood-clustering procedure.

\paragraph{Negative-sample construction (SOE sampler).} For all five time-series benchmarks, we use the same SOE sampler hyperparameters. We set the Sinkhorn regularization to $\varepsilon=0.02$ and define the energy threshold as $\tau_E=\rho_q+m$, where $\rho_q$ is the $95$-th percentile of the refined ID energies and the margin is $m=0.04$. For proposal generation, we use isotropic Gaussian noise with standard deviation $\sigma=0.02$ and draw $M=128$ proposals per anchor. The anchor pool contains the top-$A$ high-energy ID points, with $A=\lceil0.02n\rceil$ capped at $256$ and lower bounded by $8$ for small entities. When constructing the reference measure $\widehat{\nu}_n$, we trim the top $8\%$ highest-energy ID points to reduce boundary contamination.

\paragraph{Classification stage (downstream detector).}
We use the same ResNet backbone, $5$-NN soft-assignment, $50$ epochs of
Adam ($\mathrm{lr}=10^{-2}$, weight decay $10^{-3}$) with constant
schedule, batch size $50$, entropy weight $2.0$, and inconsistency weight
$0$. Anomaly contamination of the training pool is fixed at $0.99$.
Detection scores are reported point-wise following \cite{kim2022towards}.

\subsection{Quality and stability metrics}
\label{app:ts-metrics}

For every (sampler, entity) pair we generate negatives
$\{v_i\}_{i=1}^N \subset \mathcal{Z}$ and evaluate them against the
in-distribution embeddings $Z_n$ and pseudo-labels $\{y_i\}$ on the
\emph{same} latent space the encoder produces. All metrics are computed
on the $4$-D unit sphere with cosine ground cost. Let $z(v)$ denote the
1-NN of $v$ in $Z_n$ under $d_{\cos}$, $\mathrm{src}(v_i)$ the index of
the boundary anchor that proposed $v_i$, and $\mathcal{N}_k(v)$ its
$k$ nearest ID neighbors.

\paragraph{Energy gap  (Table 2).}
$\mathrm{Gap}=\widehat E(v) - \tau_E$, averaged over all generated
negatives. Measures by how much each accepted negative exceeds the
acceptance threshold. Larger gaps mean negatives sit deeper in the
high-energy outlier region, but as $\tau_E$ rises into the right tail of
the ID energy distribution the gap shrinks because $\tau_E$ rises faster
than $\widehat E(v)$ can.

\paragraph{Feasible fill ratio (Table 2).}
Fraction of ID anchors with at least one of $M$ proposals satisfying
$\widehat E(v_j)\ge\tau_E$,
\begin{equation}
\mathrm{Fill}=\frac{1}{n}\sum_{i=1}^n \mathbf{1}\!\left\{
\exists j\in[M]:\ \widehat E(v_j^{(i)})\ge \tau_E
\right\}.
\end{equation}

\paragraph{$\mathrm{MMD}^2$ (RBF, median heuristic, Figure 2).}
Squared maximum mean discrepancy between the negative and ID samples,
\begin{equation}
\mathrm{MMD}^2(\widehat P_v, \widehat\nu_n)
= \frac{1}{N^2}\sum_{i,j} k(v_i,v_j)
+ \frac{1}{n^2}\sum_{i,j} k(z_i,z_j)
- \frac{2}{Nn}\sum_{i,j} k(v_i,z_j),
\end{equation}
with Gaussian kernel $k(a,b)=\exp(-\gamma\|a-b\|_2^2)$ and bandwidth
$\gamma=1/(2\sigma_{\mathrm{med}}^2)$, where $\sigma_{\mathrm{med}}$ is
the median pairwise Euclidean distance on a random $1024$-point subset of
the union $\widehat P_v\cup \widehat\nu_n$. Both samples are subsampled
to $2048$ points before evaluation.
\emph{Higher is better}: large $\mathrm{MMD}^2$ means the negatives form
a distribution that is detectable as different from ID.

\paragraph{Sliced Wasserstein-2 distance (Figure 2).}
The 2-Wasserstein distance averaged over $1000$ random unit-norm 1-D
projections,
\begin{equation}
\widehat{\mathrm{SW}}_2(\widehat P_v, \widehat\nu_n)
= \sqrt{\frac{1}{L}\sum_{\ell=1}^L W_2^2\!\left(\theta_\ell^\top \widehat P_v,\theta_\ell^\top \widehat\nu_n\right)},
\quad L=1000,
\end{equation}
with $W_2$ on each line computed in $O(n\log n)$ via sorting.

\paragraph{Intra-negative diversity (Figure 2).}
Mean pairwise cosine distance among generated negatives,
\begin{equation}
\mathrm{Diversity}=\frac{1}{N(N-1)}\sum_{i\ne j}\big(1-v_i^\top v_j\big).
\end{equation}

\paragraph{Anchor fidelity (intended-class purity, Figure 2).}
Fraction of negatives whose $1$-NN in $Z_n$ shares the pseudo-label of
the boundary anchor that proposed them,
\begin{equation}
\mathrm{Fidelity}
= \frac{1}{N}\sum_{i=1}^N \mathbf{1}\!\left\{
y_{z(v_i)} \;=\; y_{\mathrm{src}(v_i)}
\right\}.
\end{equation}

\paragraph{Contamination (Table 7).} $\mathrm{Contam.} = 1 - \mathrm{Fidelity}$.

\paragraph{Cross-class margin (Table 7).}
Let $\mu_c$ be the (re-normalized) mean of $Z_n$ within pseudo-class $c$.
For each negative $v_i$ with source class $c^\star_i = y_{\mathrm{src}(v_i)}$,
\begin{equation*}
\mathrm{Margin}(v_i) = \cos(v_i,\mu_{c^\star_i}) - \max_{c\ne c^\star_i}\cos(v_i,\mu_c),
\qquad
\mathrm{Margin} = \frac{1}{N}\sum_i \mathrm{Margin}(v_i).
\end{equation*}

\begin{table}[h]
\centering\small
\caption{Component ablation of the TS outlier-synthesis sampler.}
\label{tab:ts-component-ablation}
\begin{tabular}{lccccc}
\toprule
Variant & Fidelity $\uparrow$ & Contam. $\downarrow$ & Margin $\uparrow$ & MMD$^2$ $\uparrow$ & Diversity $\uparrow$ \\
\midrule
Random & 0.339 $\pm$ 0.014 & 0.661 $\pm$ 0.014 & -0.000 $\pm$ 0.002 & 0.140 $\pm$ 0.043 & 0.051 $\pm$ 0.074 \\
w/o top-A anchor & 0.329 $\pm$ 0.025 & 0.671 $\pm$ 0.025 & 0.000 $\pm$ 0.002 & 0.467 $\pm$ 0.121 & 0.223 $\pm$ 0.062 \\
w/o $\rho$ threshold & 0.988 $\pm$ 0.004 & 0.012 $\pm$ 0.004 & 0.034 $\pm$ 0.038 & 1.076 $\pm$ 0.227 & 0.182 $\pm$ 0.026 \\
w/o argmax select & 0.989 $\pm$ 0.004 & 0.011 $\pm$ 0.004 & 0.034 $\pm$ 0.039 & 1.049 $\pm$ 0.247 & 0.099 $\pm$ 0.022 \\
$k$-NN & 0.990 $\pm$ 0.031 & 0.010 $\pm$ 0.031 & 0.025 $\pm$ 0.030 & 0.934 $\pm$ 0.301 & 0.195 $\pm$ 0.027 \\
KDE & 0.992 $\pm$ 0.028 & 0.008 $\pm$ 0.028 & 0.028 $\pm$ 0.032 & 0.978 $\pm$ 0.265 & 0.198 $\pm$ 0.024 \\
\textbf{SBOG (ours)} & 0.999 $\pm$ 0.004 & 0.001 $\pm$ 0.004 & 0.082 $\pm$ 0.038 & 1.156 $\pm$ 0.227 & 0.202 $\pm$ 0.026 \\
\bottomrule
\end{tabular}
\end{table}

\section{Image Experimental Details}
\label{app:image-details}
\subsection{Configuration}

\paragraph{In-distribution datasets and classifiers.}
We follow the experimental protocol of recent image outlier-synthesis
works~\citep{npos, du2023dream, gao2025good, liao2025bood} and the
benchmark setting of~\citep{huang2021mos}. The ID classifier is
ResNet-34~\citep{he2016deep}, trained on CIFAR-100 and ImageNet-100~%
\citep{deng2009imagenet} with the standard cross-entropy objective.
The latent representation $z(x;\theta)\in\mathbb{R}^{768}$ is the
row-normalised penultimate feature.

\paragraph{Class text anchors.}
Class anchors $\{t_c\}_{c=1}^{C}$ are the unit-normalised text-token
embeddings produced by the text encoder of Stable Diffusion v1.5 on each
class name.

\paragraph{ID feature bank.}
For each class $c$ we cache $1000$ ID features, giving an ID tensor
$Z_n\in\mathbb{R}^{C\times 1000\times 768}$. All energies, anchors and
candidates are computed in this normalised cosine space.

\paragraph{Parameters in outlier energy.}
\begin{itemize}\setlength{\itemsep}{2pt}
  \item Smoothing parameter $\varepsilon=0.05$.
  \item Threshold $\tau_E$ calibrated per class as the $0.95$-quantile of
        $\widehat{E}(z_i)$ over $Z_n$, plus a margin $m$:
        $\tau_E=\rho+m$ with $m=0$ as default.
  \item Gaussian perturbation scale $\sigma=0.012$.
  \item Each anchor proposes $M=256$ candidates per round, up to
        $\texttt{max\_rounds}=10$ rounds, until the per-class accepted set
        reaches $N_{\text{ood}}=10000$.
  \item Local-energy reference $=$ the $128$ nearest ID neighbours of each
        anchor; seed-cosine floor $0.7$.
\end{itemize}

\paragraph{Decoder.}
Accepted latents are decoded back to image space by Stable Diffusion v1.5
conditioned on the class token of the source anchor; one image per
accepted latent. The decoded set $\mathcal{D}_{\text{ood}}$ is used as
auxiliary outlier exposure during detector calibration.

\subsection{Definition of Diagnostic Metrics}

This subsection collects the precise definitions of every metric that
appears in the diagnostic tables and figures of Section~4.2 (Tables~4,~6,~8
and Figure~3). Throughout, $Z_n=\{z_i\}_{i=1}^{n}\subset\mathcal{Z}$ denotes
the ID feature bank for the class under consideration, $t_c\in\mathcal{Z}$
its class text anchor, $\widehat{E}(\cdot)=E^{\widehat{\nu}_n}_{\varepsilon}(\cdot)$
the empirical Sinkhorn outlier energy of Definition~3.1 with smoothing
$\varepsilon$, and $\tau_E=\rho+m$ the energy threshold of Algorithm~1.
Per-class quantities are reported as mean $\pm$ standard deviation across
the $100$ ImageNet-100 classes.

\paragraph{Empirical decision margin~$\Delta$ (Figure 4).}
For each candidate $v$ in the accepted candidate pool we compute
$|\widehat{E}(v)-\tau_E|$, and report
\begin{equation}
  \Delta \;=\; \mathrm{median}\bigl\{|\widehat{E}(v)-\tau_E|:v\in\mathcal V\bigr\}.    
\end{equation}
A larger $\Delta$ means accepted/rejected candidates sit further from the
threshold, so each individual decision is more confident. This is the
quantity controlled by Theorem~3.4: the sample complexity scales as
$\varepsilon^{2}/(\underline K^{2}\Delta_E^{2})$, so $\Delta$ acts as the
operationally observable proxy for $\Delta_E$.

\paragraph{Flip Rate@$256$ (Figure 4).}
For each class we draw a calibration subset $S$ of size $n=256$ uniformly
without replacement from $Z_n$, recompute the empirical energy
$\widehat{E}^{(S)}$ from $S$ alone, and re-derive a class-specific
threshold $\tau_E^{(S)}$ by the same $0.95$-quantile rule. The flip rate is
\begin{equation}
  \mathrm{FlipRate@256}
  \;=\; \mathbb{E}_{S}\!\left[\frac{1}{|\mathcal V|}
  \sum_{v\in\mathcal V}
  \mathbf{1}\!\left\{
    \mathbf{1}\{\widehat{E}^{(S)}(v)\!\geq\!\tau_E^{(S)}\}
    \;\neq\;
    \mathbf{1}\{\widehat{E}(v)\!\geq\!\tau_E\}
  \right\}\right],
\end{equation}
averaged over $30$ random calibration subsets. Smaller is better:
flip rate $\to 0$ means small calibration sets reach the same accept/reject
decisions as the full ID bank, the regime predicted by Theorem~3.4 when the
margin $\Delta_E$ is well-conditioned.

\paragraph{Token cosine (Table 4).}
Mean cosine similarity to the class text token,
\begin{equation}
  \overline{\cos}_{\text{tok}}
  \;=\; \frac{1}{N}\sum_{j=1}^{N}\langle v_j,t_c\rangle.
\end{equation}

\paragraph{Diversity (Table~4).}
One minus the mean off-diagonal cosine of the candidate set,
\begin{equation}
  \mathrm{Diversity}
  \;=\; 1 \;-\;
  \frac{1}{N(N-1)}\sum_{i\neq j}\langle v_i,v_j\rangle.
\end{equation}
A larger value means candidates spread more uniformly along the boundary
rather than collapsing to a few directions.

\paragraph{Yield (Table~4).}
The product of the on-class pass rate and the mean Sinkhorn outlier energy
of the passing subset:
\[
  \mathrm{Yield}
  \;=\; \mathrm{Pass}\cdot\overline{E}_{\mathrm{pass}},
\]
where, for each anchor, $32$ Gaussian perturbations of scale $\sigma=0.012$
are drawn,
$\mathrm{Pass}=\Pr[\cos(v,t_c)\!\geq\!\tau_{\text{sem}}]$ is the fraction
that survives the semantic filter, and $\overline{E}_{\mathrm{pass}}$ is
the mean Sinkhorn outlier energy of those passing perturbations. Yield
captures both axes simultaneously: a strategy must produce on-class
proposals \emph{and} those proposals must be high-energy.

\paragraph{Intended-class purity (Table~6).}
Fraction of candidates whose nearest ID neighbour over \emph{all} $C$
classes lies in the intended class $c$:
\begin{equation}\label{eq24}
  \mathrm{Purity}_c
  \;=\; \frac{1}{N}\sum_{j=1}^{N}
        \mathbf{1}\!\left\{\arg\max_{i\in\bigcup_{c'} Z_n^{c'}}
        \langle v_j,z_i\rangle\;\in\;Z_n^{c}\right\},
\end{equation}
where $Z_n^{c'}$ is the ID bank of class $c'$ (we use $150$ samples per
class for this nearest-neighbour bank). Higher $=$ the candidate stays
inside the correct class manifold.

\paragraph{Margin vs.\ other classes (Table~6).}
\begin{equation}\label{eq25}
  \mathrm{Margin}
  \;=\; \frac{1}{N}\sum_{j=1}^{N}
        \Bigl[\langle v_j,t_c\rangle
              - \max_{c'\neq c}\langle v_j,t_{c'}\rangle\Bigr].
\end{equation}

\paragraph{Contamination (Table~6).}
Fraction of candidates whose \emph{nearest class token} is not the
intended class:
\begin{equation}\label{eq26}
  \mathrm{Contamination}
  \;=\; \frac{1}{N}\sum_{j=1}^{N}
        \mathbf{1}\!\left\{\arg\max_{c'}\langle v_j,t_{c'}\rangle \neq c\right\}.
\end{equation}

\paragraph{Fidelity.}
Mean cosine similarity between each generated negative and its source
anchor's pseudo-class prototype $\mu_{y(v_j)}$, averaged over
$\mathcal{N}$. High fidelity means generated negatives are still
recognisably class-conditional rather than generic noise.

\paragraph{Contamination.}
Per-negative defined as
$\mathrm{Contam}(v_j)=\mathbf{1}\{\arg\max_{c'}\langle v_j,\mu_{c'}\rangle\neq y(v_j)\}$,
averaged over $\mathcal N$. Lower is better; together with Fidelity it
satisfies $\mathrm{Fidelity}+\mathrm{Contam}\approx 1$ in the
matched-prototype regime.

\paragraph{Margin.}
$\mathrm{Margin}(v_j)=\langle v_j,\mu_{y(v_j)}\rangle
                     -\max_{c'\neq y(v_j)}\langle v_j,\mu_{c'}\rangle$,
averaged over $\mathcal N$. Positive margin $=$ negatives sit closer to
their pseudo-class than to any other.

\paragraph{$\mathrm{MMD}^2$.}
Squared maximum-mean-discrepancy between the generated-negative
distribution $\widehat{P}_{\mathcal N}$ and the empirical positive
distribution $\widehat{P}_{Z_n}$,
\[
  \mathrm{MMD}^2(\widehat{P}_{\mathcal N},\widehat{P}_{Z_n})
  \;=\;
  \mathbb{E}\!\bigl[k(v,v')\bigr]
  + \mathbb{E}\!\bigl[k(z,z')\bigr]
  - 2\mathbb{E}\!\bigl[k(v,z)\bigr],
\]
with a Gaussian RBF kernel $k(\cdot,\cdot)$ at the median heuristic
bandwidth. Larger $=$ generated negatives induce a stronger
distributional shift away from positives, which is desirable for
contrastive negative mining.

\paragraph{Intra-negative diversity.}
$1-\frac{1}{N(N-1)}\sum_{i\neq j}\langle v_i,v_j\rangle$, the same
formula as ``Diversity'' in Table~4 above, applied to the time-series
generated-negative set. Higher $=$ negatives cover a wider range of
boundary directions instead of clustering on a single axis.

The last result shows the factors that influence the quality of the generation. Consider the two smoothing parameter $\epsilon$ in the outlier energy $E(v)=-\epsilon \log \int \exp\!\bigl(-d(v,u)/\epsilon\bigr)d\nu(u)$, and the energy margin $m$ in the acceptance rule $E(v)\ge \rho+m$. The four panels in Figure~\ref{fig:energy_ablation} respectively summarize their influence on energy separation (Energy Gap and Energy Strength), semantic preservation $\cos(v,t_c)$ (Semantic Alignment), and sampling feasibility (Feasible Fill Ratio). 

\begin{figure}[h]
    \centering
    \includegraphics[width=1.0\linewidth]{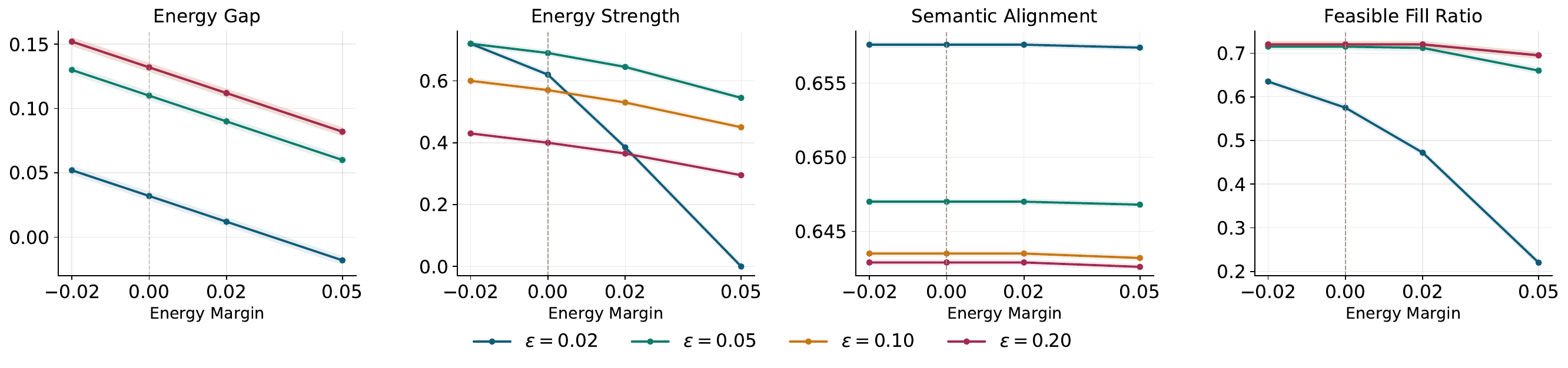}
    \caption{Generation quality validation}\vspace{-10pt}
    \label{fig:energy_ablation}
\end{figure}

As $m$ increases, the acceptance condition becomes stricter, which consistently reduces the energy advantage and the number of feasible synthesized outliers; this effect is especially severe for very small $\epsilon$, where the energy landscape becomes overly sharp and the feasible fill ratio drops rapidly. By contrast, semantic alignment remains nearly unchanged across settings, suggesting that the dominant trade-off is not semantic drift but the balance between outlier strength and synthesis stability. Overall, the results indicate that a moderate $\epsilon$ together with a small margin provides the most reliable setting, preserving strong outlierness while maintaining stable and feasible synthesis.

\end{document}